\documentclass{article} 
\usepackage{iclr2027_conference,times}
\newcommand{\ourmethod}{\texttt{MetaHeta}}

\usepackage{amsmath,amsfonts,bm}

\def\eqref#1{equation~\ref{#1}}

\def\1{\bm{1}}

\DeclareMathAlphabet{\mathsfit}{\encodingdefault}{\sfdefault}{m}{sl}
\SetMathAlphabet{\mathsfit}{bold}{\encodingdefault}{\sfdefault}{bx}{n}

\usepackage[
    colorlinks=true,
    linkcolor=red,
    citecolor=teal,
    urlcolor=brown
]{hyperref}
\usepackage{url}
\usepackage[inline]{enumitem}
\usepackage{xcolor}         
\usepackage{amsmath}
\usepackage{amsthm}
\usepackage{amssymb}
\usepackage{graphicx}
\usepackage{subcaption}
\usepackage[linesnumbered,ruled,vlined]{algorithm2e}
\usepackage{booktabs}       
\usepackage{float}
\usepackage{wrapfig}
\usepackage[table]{xcolor}
\usepackage[framemethod=tikz]{mdframed}
\usepackage{tikz}
\DeclareRobustCommand{\thresholdline}{%
  \tikz[baseline=-0.5ex]{%
    \draw[gray, line width=1.2pt, dash pattern=on 4pt off 2pt]
      (0,0) -- (0.6,0);
  }%
}

\definecolor{foeglasspurple}{HTML}{f0fbff}

\newmdenv[
  backgroundcolor=foeglasspurple,
  linecolor=foeglasspurple,
  linewidth=0pt,
  roundcorner=10pt,
  innerleftmargin=8pt,
  innerrightmargin=8pt,
  innertopmargin=7pt,
  innerbottommargin=7pt,
  skipabove=6pt,
  skipbelow=6pt
]{contributionbox}

\newtheorem{proposition}{Proposition}

\newtheoremstyle{restated}
  {\topsep}      
  {\topsep}      
  {\itshape}     
  {}             
  {\bfseries}    
  {.}            
  {.5em}         
  {\thmname{#1}\thmnote{ #3}}  

\theoremstyle{restated}
\newtheorem*{propositionrestated}{Proposition}

\title{Few-Shot Bioactivity Prediction with \\Meta-Learning under Assay Heterogeneity}

\author{
Michal Kmicikiewicz\textsuperscript{1,2,4},
Tommy Rochussen\textsuperscript{2,3,4},
Vincent Fortuin\textsuperscript{3,4,6},
Ewa Szczurek\textsuperscript{1,5,7}
\AND
{\normalfont
\begin{tabular}{c}
\textsuperscript{1}Institute of AI for Health, Helmholtz Munich \\
\textsuperscript{2}School of Computation, Information and Technology, Technical University of Munich \\
\textsuperscript{3}Helmholtz AI, \textsuperscript{4}Munich Center for Machine Learning \\
\textsuperscript{5}Faculty of Mathematics, Informatics and Mechanics, University of Warsaw \\
\textsuperscript{6}Department of Computer Science \& Artificial Intelligence, \\
University of Technology Nuremberg \\
\textsuperscript{7}Faculty of Mathematics, Informatics and Statistics, LMU Munich \\
\texttt{\{michal.kmicikiewicz,ewa.szczurek\}@helmholtz-munich.de}
\end{tabular}
}
}

\iclrfinalcopy 
\begin{document}

\maketitle

\begin{abstract}
Accurate bioactivity prediction is a central challenge in early-stage drug discovery, as individual assays often contain too few measurements to train reliable models independently.
Meta-learning offers a principled approach to this few-shot setting, but assay heterogeneity may limit its effectiveness.
Here, we test this hypothesis and show that meta-learning performance degrades as meta-training tasks become more heterogeneous.
To address this, we introduce \ourmethod{}, a meta-learning framework that accounts for assay heterogeneity by conditioning predictions on auxiliary data from related assays, with relatedness defined flexibly from available assay information.
The architecture of \ourmethod{} combines linear attention over large auxiliary datasets with exact attention over scarce task-specific context, enabling efficient scaling to the former without compromising exact attention over the latter.
We demonstrate the benefits of our approach on assays from ChEMBL and BindingDB, improving few-shot bioactivity prediction and downstream compound prioritization in retrospective Bayesian optimization.

\end{abstract}

\section{Introduction}
Accurately predicting compound activity in biological assays is a central challenge in early-stage drug discovery. Reliable \textit{in silico} predictors can guide candidate selection and therefore reduce the time and resources required for experimental evaluation \citep{Jiménez-Luna02092021, Scalia2026}.
Machine learning models have been widely employed for this task, but their performance typically relies on the availability of large amounts of high-quality data \citep{Tian2024, 10.1039/c8sc00148k}.
However, since bioactivity measurements are expensive to acquire, individual assays often contain too little data to train accurate models in isolation, placing assay-specific bioactivity prediction in a few-shot learning regime \citep{Vella2022-jt, Altae-Tran2017-gs}.
A natural way to alleviate this data scarcity is to share information across assays, but naively pooling their measurements risks conflating distinct assay-specific relationships.

A particularly promising approach to this problem is meta-learning \citep{schmidhuber, hochreiter_meta}, which aims to learn shared structure across a collection of related \textit{tasks} to facilitate adaptation to new ones, but without directly pooling their measurements.
This collection of tasks is commonly referred to as the \textit{meta-dataset}.
In bioactivity prediction, each such task can naturally correspond to an individual assay, with a small number of observed compound–activity pairs forming a \textit{context set} from which activities are predicted for new compounds (\textit{target inputs}).

A variety of meta-learning approaches have been developed to enable information sharing across tasks.
\citet{pmlr-v70-finn17a} introduced MAML, a gradient-based algorithm that learns a shared parameter initialization from which the model can adapt to new tasks in a few gradient steps.
With a similar bi-level optimization objective, \citet{chen2023metalearning} learn a shared feature representation on top of which task-specific Gaussian Processes (GPs) are fit.
\begin{wrapfigure}[12]{r}{0.55\textwidth}
    \centering
    \includegraphics[width=0.55\textwidth]{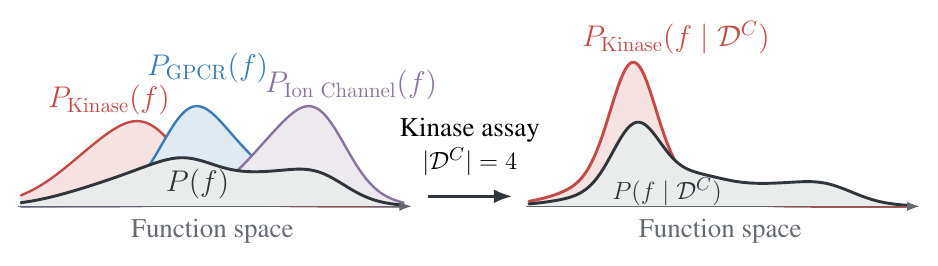}
    \caption{\textbf{Failure mode of meta-learning under assay heterogeneity.}
    A heterogeneous meta-dataset can induce a broad prior, such that limited task-specific data are insufficient to concentrate the posterior around functions relevant to a new assay.}
    \label{fig:example}
\end{wrapfigure}
Neural Processes (NPs) \citep{garnelo2018neuralprocesses, pmlr-v80-garnelo18a} instead amortize task adaptation by learning a mapping from a set of context points to predictive distributions at target locations, offering a scalable alternative to GPs. 
Transformer Neural Processes (TNPs) \citep{pmlr-v162-nguyen22b} parameterize this mapping using attention mechanisms, making them well suited to modeling sets while capturing rich interactions between context and target points. 
This architecture also underpins Prior-Data Fitted Networks (PFNs), 
which pre-train TNPs on massive synthetic datasets for off-the-shelf prediction on new tasks \citep{muller2022transformers}.

A key assumption underlying meta-learning is that tasks in the meta-dataset are sufficiently related to induce a shared prior that supports efficient adaptation to new tasks from limited data \citep{Baxter_2000}.
However, the heterogeneity of assays in large-scale bioactivity datasets, arising from differences in experimental conditions, endpoints, and measurement protocols \citep{Landrum2024}, can render this assumption questionable, potentially limiting the effectiveness of meta-learning.
We illustrate the intuition behind this potential failure mode in Figure~\ref{fig:example}.
Indeed, prior work suggests that meta-learning methods struggle to adapt to task-specific bioactivity data  \citep{stanley2021fsmol}, with \citet{schimunek2023contextenriched} reporting that even a naive baseline that ignores the context set can outperform several established meta-learning approaches. Here, we hypothesize that assay heterogeneity is a key factor limiting the effectiveness of meta-learning for bioactivity prediction, and  explicitly account for it in a dedicated modeling framework. 
\begin{contributionbox}
\textbf{Our contributions are as follows:}\\
\textbf{(i)}~we systematically study the effect of meta-training task
heterogeneity on few-shot bioactivity prediction, demonstrating in a controlled
setting that increasing heterogeneity can degrade meta-learning performance
across a range of methods and context sizes;\\
\textbf{(ii)}~motivated by these findings, we introduce \ourmethod{}, a meta-learning framework in which predictions are additionally conditioned on data from related assays.
This allows the model to approximate the posterior predictive under a more informative prior relevant to the target task, rather than under that induced by the entire heterogeneous meta-training distribution.
Assay relatedness can be defined flexibly based on target homology or known experimental conditions; \\
\textbf{(iii)}~to make conditioning on such auxiliary data computationally practical, we develop a TNP-based architecture that combines exact attention over scarce task-specific context with linear attention over potentially large auxiliary datasets, enabling efficient conditioning at scale.
\end{contributionbox}
We further demonstrate that findings from our controlled heterogeneity study translate into practical gains on assays from real-world bioactivity databases, improving few-shot bioactivity prediction and retrospective Bayesian optimization.

\section{Related work}
Early meta-learning approaches include metric-based methods \citep{matchingnets, prototypicalnets}, developed primarily for few-shot classification, and adaptation-based methods that learn shared structure across tasks to enable rapid task-specific adaptation from limited data \citep{pmlr-v70-finn17a, nichol2018firstordermetalearningalgorithms}. 
These ideas have also been applied to few-shot molecular property prediction \citep{nguyen2020metalearninggnninitializationslowresource, Altae-Tran2017-gs, schimunek2023contextenriched, NEURIPS2021_91bc333f, wang2024pintuning}, with much of this work focusing on classification.
More directly relevant to our probabilistic regression setting, deep kernel-based methods meta-learn shared feature representations while retaining task-specific Gaussian process inference \citep{chen2023metalearning, NEURIPS2020_b9cfe8b6}.
A complementary line of work instead amortizes task adaptation by directly learning a mapping from task-specific context and target inputs to predictive distributions, with Neural Processes (NPs) forming a canonical family of such methods \citep{garnelo2018neuralprocesses, pmlr-v80-garnelo18a, kim2018attentive, rochussen2026amortising}. 
In recent years, NPs equipped with transformer architectures \citep{NIPS2017_3f5ee243} (TNPs) have emerged as a prominent variant \citep{pmlr-v162-nguyen22b, feng2023latent, ashman_phd}, with Prior-Data Fitted Networks (PFNs) further extending this paradigm by pre-training TNPs on broad synthetic task distributions \citep{muller2022transformers}.
This line of work has been particularly influential in tabular learning \citep{Hollmann2025, grinsztajn2026tabpfn3technicalreport, qu2026tabiclv}, with recent studies showing that tabular foundation models can exhibit capabilities beyond conventional tabular settings \citep{nakerst2026tabularfoundationmodelsnontabular}, including molecular property prediction \citep{hicham2026tabularfoundationmodelsincontext}. 
We also discuss methods that explicitly account for task heterogeneity in Appendix~\ref{app:more_rel_work}.

\section{Setup}
Let $\mathcal{X} \subseteq \mathbb{R}^d$ denote the space of $d$-dimensional molecular representations and $\mathcal{Y} \subseteq \mathbb{R}$ the space of assay outcomes.
We model each assay $\tau$ as a task, corresponding to an unknown function $f_\tau: \mathcal{X} \rightarrow \mathcal{Y}$, where $f_\tau \sim P(f)$, and $P(f)$ denotes an underlying stochastic process on $\mathcal{X}$. 
For each $\tau$, we observe a finite dataset $\mathcal{D} = \{(x_n, y_n)\}_{n=1}^{N} \in \mathfrak D$
generated according to $y_n = f_\tau(x_{n}) + \epsilon_n$, with $\epsilon_n \sim \mathcal N(0,\sigma_\tau^2)$. Here, $\mathfrak{D} := \bigcup_{N\in\mathbb{N}} (\mathcal{X}\times\mathcal{Y})^N$ denotes the collection of all such datasets.
Each dataset can be partitioned into a context set $\mathcal D^C$ and a target set $\mathcal D^T$ of sizes $N_c := |\mathcal D^C|$ and $N_t := |\mathcal D^T|$, respectively.
We denote the corresponding inputs and outputs by $(X^C, Y^C) \in \mathbb{R}^{N_c \times d} \times \mathbb{R}^{N_c}$ and $(X^T, Y^T) \in \mathbb{R}^{N_t \times d} \times \mathbb{R}^{N_t}$. 
We assume access to a meta-dataset $\mathcal D_{\mathrm{meta}} := \{\mathcal D_\tau\}_{\tau=1}^M$ consisting of $M$ assay-specific datasets, each containing a potentially small number of observations.
Our goal is to meta-learn a predictor that maps an assay-specific context set and target inputs to a predictive distribution over target outcomes. 
Formally, we seek to learn the posterior predictive map $\pi:\mathfrak D\times\mathcal X^{N_t}\rightarrow\mathcal P(\mathcal Y^{N_t})$, which we parameterize by $p_\theta(Y^T \mid X^T, \mathcal{D}^C)$.

\section{\ourmethod{}}
\subsection{Beyond a single shared prior}
\paragraph{Failure mode of meta-learning under assay heterogeneity}
Consider a probabilistic meta-learner $p_\theta$ trained by maximizing the expected log-likelihood:
\begin{align}
\mathcal{L}(\theta)
&= \mathbb{E}_{X^T, Y^T, \mathcal{D}^C}
\left[
\log p_\theta\left(Y^T \mid X^T, \mathcal{D}^C\right)
\right] \\
&= -\mathbb{E}_{X^T, \mathcal{D}^C}
\left[
\mathrm{KL}\!\left[
p\left(Y^T \mid X^T, \mathcal{D}^C\right)
\,\|\, 
p_\theta\left(Y^T \mid X^T, \mathcal{D}^C\right)
\right]
\right] + \mathrm{const},
\end{align}
where the expectations over meta-training tasks and their context-target splits are estimated using $\mathcal D_{\mathrm{meta}}$, and $p(Y^T \mid X^T, \mathcal D^C)$ denotes the posterior predictive distribution (PPD) that $p_\theta(Y^T \mid X^T, \mathcal D^C)$ is trained to approximate \citep{pmlr-v202-nagler23a}. 
The PPD can be written as:
\begin{align}
p(Y^T \mid X^T,\mathcal D^C) = \int p(Y^T\mid X^T,f)P(df\mid\mathcal D^C),
\end{align}
where the prior $P(f)$ underlying the posterior $P(\cdot\mid\mathcal D^C)$ reflects the distribution of assay functions represented in $\mathcal {D}_{\mathrm{meta}}$.
Consequently, the composition of tasks in $\mathcal {D}_{\mathrm{meta}}$ shapes the shared prior $P(f)$ and, in turn, influences the ability of the meta-learner to adapt to new tasks given limited task-specific context $\mathcal{D}^C$.
Since bioactivity assays are often highly heterogeneous \citep{Landrum2024}, the prior induced by $\mathcal {D}_{\mathrm{meta}}$ may be too broad for the task-specific context to sufficiently concentrate the posterior on the relevant assay regime, limiting the effectiveness of meta-learning.

\paragraph{Accounting for heterogeneity through assay relatedness}
Motivated by this, we propose to model tasks as arising from a mixture of latent assay regimes $\xi_k \sim p(\xi)$, reflecting their shared characteristics such as the biological target or assay conditions.
Each such component $\xi_k$ induces its own prior $P_{\xi_k}(f)$, giving rise to a corresponding component-specific PPD $p_{\xi_k}(Y^T \mid X^T, \mathcal{D}^C)$.
The prior induced by the heterogeneous $\mathcal{D}_{\text{meta}}$ can then be viewed as a mixture of these component priors, with the corresponding PPD being a mixture of the component-specific PPDs. 
Since learning a separate posterior predictive map for each of potentially many latent components is impractical, we follow \citet{pmlr-v253-ashman24a} and instead use a single meta-learner $p_\theta(Y^T \mid X^T, \mathcal{D}^C, \mathcal{D}^{\text{aux}})$ to amortize prediction across different latent components. 
Here, $\mathcal D^{\mathrm{aux}}=\{\mathcal D_{\tau'}\}$ denotes a set of auxiliary datasets generated from the same latent component $\xi_k$ as the target task $\tau$:
\begin{equation}
    \label{eq_everything_everywhere_all_at_once}
    \xi_k \sim p(\xi), \quad f_\tau, \{f_{\tau'}\} \sim P_{\xi_k}(f), \quad \left(X^T, Y^T\right), \mathcal{D}^C \sim f_\tau, \quad \mathcal{D}_{\tau'} \sim f_{\tau'}.
\end{equation}
For a single target input-output pair $(x,y)$, \citet{pmlr-v253-ashman24a} show that:
\begin{equation}
\label{eq:icicl_main_theorem}
\begin{aligned}
\mathbb{E}_{\xi_k,\, \mathcal{D}^C,\, \mathcal{D}^{\text{aux}}}
\Big[
\mathrm{KL}\big(
p_{_{\xi_k}}(y \mid x, \mathcal{D}^C)
\;\|\;
p(y \mid x, \mathcal{D}^C, \mathcal{D}^{\text{aux}})
\big)
\Big]
\\
\leq\;
\mathbb{E}_{\xi_k,\, \mathcal{D}^C}
\Big[
\mathrm{KL}\big(
p_{_{\xi_k}}(y \mid x, \mathcal{D}^C)
\;\|\;
p(y \mid x, \mathcal{D}^C)
\big)
\Big].
\end{aligned}
\end{equation}
Intuitively, conditioning on auxiliary data provides additional evidence about the underlying assay regime $\xi_k$ and can yield a closer approximation to the corresponding PPD $p_{\xi_k}$ than conditioning on task-specific context alone.  
Such additional evidence can be particularly valuable when $\mathcal D^C$ is small and may not be sufficiently informative on its own.
This raises two practical challenges: identifying related assays to construct $\mathcal D^{\mathrm{aux}}$, and efficiently conditioning a meta-learner on both $\mathcal D^C$ and $\mathcal D^{\mathrm{aux}}$.

\subsection{Defining assay relatedness}
Depending on the modeling problem and available information, it might be useful to define assay relatedness at different levels. We consider two classes of settings.

\paragraph{Target-based relatedness} Consider a setting where $\xi_k$ is determined by target identity, such that assays sharing the same target are drawn from $P_{\xi_k}(f)$. Then, Eq.~\ref{eq:icicl_main_theorem} directly suggests constructing $\mathcal{D}^{\text{aux}}$ by collecting tasks with the same biological target as $\tau$. However, this construction can be overly restrictive with, for example, understudied protein targets, which are particularly relevant from a few-shot modeling perspective precisely because their assay data are often scarce \citep{Sharma2023-ii}. In such settings, known biological relationships between targets can be leveraged to construct $\mathcal{D}^{\text{aux}}$ instead, as formalised in Proposition~\ref{prop:ashman_to_protfams}:

\begin{proposition}
\label{prop:ashman_to_protfams}
Consider an extension of the data-generating process in Eq.~\ref{eq_everything_everywhere_all_at_once} in which the latent components are structured hierarchically as 
$\xi_{\mathrm{family}}\sim p(\xi)$ and $
\xi_{\mathrm{prot}}, \{\xi_{\mathrm{prot}'}\}
\sim
p(\cdot\mid\xi_{\mathrm{family}})$, with $f_\tau\sim P_{\xi_{\mathrm{prot}}}(f)$ and $\{f_{\tau'}\}\sim P_{\xi_{\mathrm{prot}'}}(f)$. Under this generative structure, assume that auxiliary data from target homologs provide no additional information for predicting $y$ once the target's protein-specific component is known. Then:
\[
\begin{aligned}
&\mathbb{E}_{\xi_{\mathrm{family}},\,\xi_{\mathrm{prot}},\,\mathcal{D}^C,\,\mathcal{D}^{\mathrm{aux}}}
\Bigg[
\mathrm{KL}\!\left(
p_{_{\xi_{\mathrm{prot}}}}(y \mid x,\mathcal{D}^C)
\;\Vert\;
p(y \mid x,\mathcal{D}^C,\mathcal{D}^{\mathrm{aux}})
\right)
\Bigg]
\\
&\qquad\leq
\mathbb{E}_{\xi_{\mathrm{family}},\,\xi_{\mathrm{prot}},\,\mathcal{D}^C}
\Bigg[
\mathrm{KL}\!\left(
p_{_{\xi_{\mathrm{prot}}}}(y \mid x,\mathcal{D}^C)
\;\Vert\;
p(y \mid x,\mathcal{D}^C)
\right)
\Bigg].
\end{aligned}
\]
\end{proposition}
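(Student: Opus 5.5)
The plan is to reproduce the argument behind Eq.~\ref{eq:icicl_main_theorem} with $\xi_{\mathrm{prot}}$ playing the role of $\xi_k$. I would rewrite each expected KL as the expected log-likelihood gap, then show that the difference between the two sides is itself an expected KL, and therefore nonnegative. All quantities are taken under the hierarchical joint distribution of $(\xi_{\mathrm{family}},\xi_{\mathrm{prot}},\{\xi_{\mathrm{prot}'}\},\mathcal D^C,\mathcal D^{\mathrm{aux}},x,y)$ specified in the statement. The predictives $p(y\mid x,\mathcal D^C)$ and $p(y\mid x,\mathcal D^C,\mathcal D^{\mathrm{aux}})$ denote the exact conditionals of this joint, with $\xi_{\mathrm{family}}$ and $\xi_{\mathrm{prot}}$ marginalised out. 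The component PPD $p_{\xi_{\mathrm{prot}}}(y\mid x,\mathcal D^C)$ is the conditional given $\xi_{\mathrm{prot}}$.

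First, I would formalise the stated assumption as the conditional independence
\[
p(y\mid x,\mathcal D^C,\xi_{\mathrm{prot}},\mathcal D^{\mathrm{aux}})=p_{\xi_{\mathrm{prot}}}(y\mid x,\mathcal D^C).
\]
Second, I would expand each KL as
\[
\mathbb E\big[\log p_{\xi_{\mathrm{prot}}}(y\mid x,\mathcal D^C)\big]-\mathbb E\big[\log q(y)\big],
\]
where the expectation is over $y\sim p_{\xi_{\mathrm{prot}}}(\cdot\mid x,\mathcal D^C)$ and $q$ is the respective comparison predictive. The entropy terms are identical on both sides, so the inequality reduces to
\[
\mathbb E\big[\log p(y\mid x,\mathcal D^C,\mathcal D^{\mathrm{aux}})\big]\ \ge\ \mathbb E\big[\log p(y\mid x,\mathcal D^C)\big].
\]
The key point is that, by the conditional independence, drawing $y$ from $p_{\xi_{\mathrm{prot}}}(\cdot\mid x,\mathcal D^C)$ and averaging over $(\xi_{\mathrm{family}},\xi_{\mathrm{prot}},\mathcal D^C,\mathcal D^{\mathrm{aux}})$ yields exactly the joint distribution of $(y,x,\mathcal D^C,\mathcal D^{\mathrm{aux}})$. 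Both expectations are therefore taken under the true joint.

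Third, under the true joint, the difference of the two sides equals
\[
\mathbb E_{x,\mathcal D^C,\mathcal D^{\mathrm{aux}}}\Big[\mathrm{KL}\big(p(y\mid x,\mathcal D^C,\mathcal D^{\mathrm{aux}})\,\Vert\,p(y\mid x,\mathcal D^C)\big)\Big]\ \ge 0.
\]
Equivalently, this difference is the conditional mutual information $I(y;\mathcal D^{\mathrm{aux}}\mid x,\mathcal D^C)\ge0$. The fact that the homologs' components $\xi_{\mathrm{prot}'}$ differ from $\xi_{\mathrm{prot}}$ does not matter for this step. It only needs $\mathcal D^{\mathrm{aux}}$ to be jointly distributed with $y$ and its conditional to be well defined. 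Dependence enters through $\xi_{\mathrm{family}}$, but the inequality holds even if that dependence is absent, in which case equality holds.

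The main obstacle is the second step: justifying that the outer expectation over $\mathcal D^{\mathrm{aux}}$ can be moved inside against $p_{\xi_{\mathrm{prot}}}$. This step is exactly where the assumption is required. Without it, $y$ given $\xi_{\mathrm{prot}}$ could still correlate with $\mathcal D^{\mathrm{aux}}$, for example through $\xi_{\mathrm{family}}$ affecting $f_\tau$ beyond $\xi_{\mathrm{prot}}$, and the reweighting would fail. I would state precisely that the hierarchical structure gives $y \perp \mathcal D^{\mathrm{aux}}\mid(\xi_{\mathrm{prot}},x,\mathcal D^C)$, which is the stated assumption. The remaining work consists of routine measure-theoretic conditions: finiteness of the expected log-likelihoods and existence of densities.
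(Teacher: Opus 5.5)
Your proposal is correct and follows essentially the same route as the paper's proof. Both arguments expand the KLs, cancel the common entropy of $p_{\xi_{\mathrm{prot}}}(y\mid x,\mathcal D^C)$, use the conditional-independence assumption to marginalise $\xi_{\mathrm{prot}}$ so the cross-entropy terms are taken under the true joint, and conclude by nonnegativity of the conditional mutual information, which the paper phrases as ``conditioning does not increase entropy in expectation.'' The differences are only cosmetic: you subtract the two sides directly and write the gap as an expected KL, whereas the paper first integrates out $\xi_{\mathrm{family}}$ and reduces each side separately to a difference of entropies.
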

Intuitively, assays of distinct but homologous proteins can still inform predictions for the target through their shared family-level component. The proof is in Appendix~\ref{app:proofs}.

\begin{figure}[t]
  \centering
  \includegraphics[width=1\linewidth]{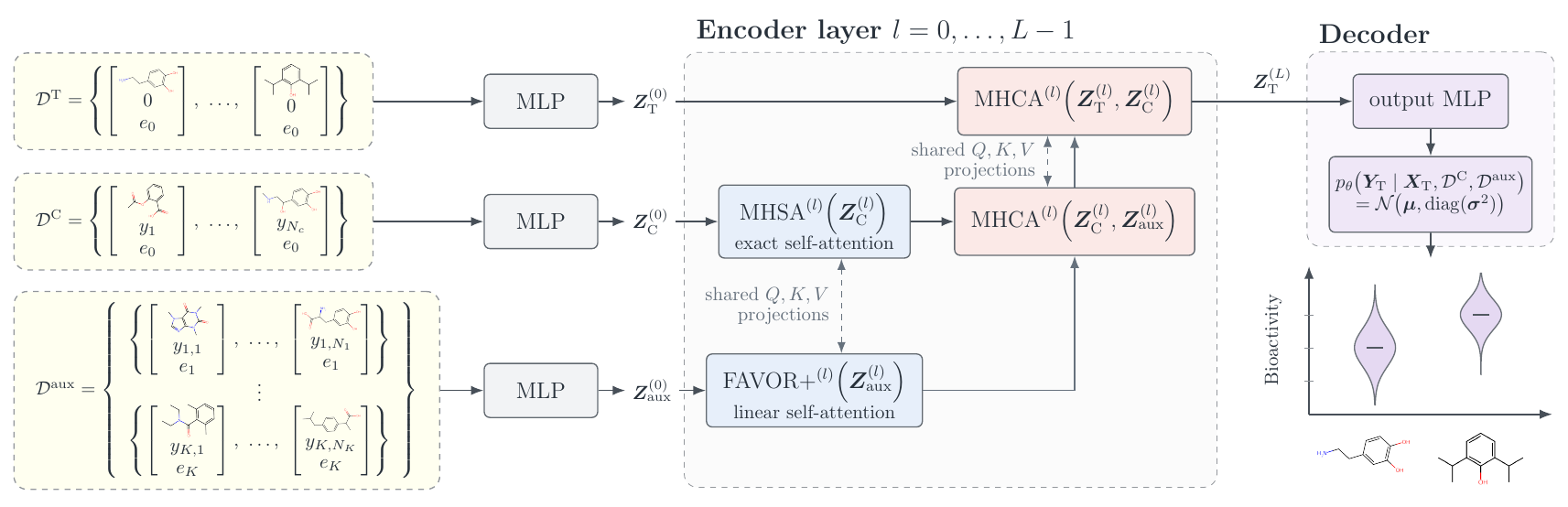}
    \caption{Architecture overview of \ourmethod. A shared input MLP embeds context, target, and concatenated auxiliary observations, augmented with task-of-origin indicators, with target activities masked to $0$. Encoder layers use exact self-attention for the context and FAVOR+ for auxiliary data, followed by context-to-auxiliary and target-to-context cross-attention. When $\mathcal D^{\mathrm{aux}}=\varnothing$, the auxiliary branch is skipped. Finally, an output MLP parameterizes a mean-field Gaussian predictive distribution. Formal definitions of the attention operators are given in Appendix~\ref{app:attention-operators}.}
  \label{fig:model}
\end{figure}

\paragraph{Description-based relatedness}
Target identities or known biological relationships may not always be available to practitioners. 
In such cases, we instead define relatedness based on broadly available information about assay protocols or experimental conditions. 
Specifically, for each $\tau$, we assume access to its natural-language description $a_\tau$, which may be noisy or incomplete. 
Prior to training, we map each $a_\tau$ to a $d$-dimensional dense representation $h_\tau = \phi(a_\tau) \in \mathbb{R}^d$, where $\phi$ denotes an embedding model; in our experiments $\phi$ is instantiated as PubMedBERT \citep{pubmedbert}. We then quantify relatedness between assays $\tau$ and $\tau'$ by cosine similarity between their description embeddings, $s_{\tau,\tau'} = \langle h_\tau,h_{\tau'}\rangle\bigl/(\|h_\tau\|\,\|h_{\tau'}\|\bigr)$. Since $s_{\tau,\tau'}$ provides only a heuristic for identifying useful auxiliary assays, we control retrieval selectivity using a threshold $\gamma$ and include $\mathcal{D}_{\tau'}$ in $\mathcal D^{\mathrm{aux}}$ if $s_{\tau,\tau'} \geq \gamma$.  
Meta-training with description-based relatedness is shown in Algorithm~\ref{algo}.

\subsection{Model}
The architecture is shaped by two competing design constraints arising from the variable availability of auxiliary data.
First, because $\mathcal{D}^{\mathrm{aux}}$ may frequently be empty, parameters dedicated exclusively to auxiliary data would often receive no learning signal, motivating parameter sharing between the processing of $\mathcal{D}^C$ and $\mathcal{D}^{\mathrm{aux}}$.
Second, $\mathcal{D}^{\mathrm{aux}}$ may instead contain many observations across multiple related datasets, making quadratic attention prohibitively expensive, while $\mathcal{D}^C$ is typically small.
Parameter sharing between $\mathcal D^C$ and $\mathcal D^{\mathrm{aux}}$ is therefore necessary, but their different scales call for different computational treatment.
Our architecture, shown in Figure~\ref{fig:model}, addresses this using two coupled TNP-like branches that share attention parameters while employing different self-attention mechanisms.
For both branches, each observation $[x, y, e_j]$ is first embedded with an MLP, with target observations having their activities masked to 0. Here, $e_j$ is a one-hot indicator of the task of origin, allowing observations from all auxiliary datasets to be concatenated into a single sequence without ambiguity. 
To preserve invariance to permutations of auxiliary datasets, the indicators are
assigned using a canonical ordering of $\mathcal D^{\mathrm{aux}}$, for example by task relatedness (Proposition~\ref{prop:perm_invariance} in Appendix~\ref{app:proofs}).
Within each encoder layer, context representations attend to the auxiliary representations, before target representations attend to the augmented context.
For the context branch, we employ standard softmax self-attention, whereas the auxiliary branch uses FAVOR+ \citep{choromanski2021rethinking}, a linear-time approximation based on positive orthogonal random features.
Since FAVOR+ provides an unbiased estimator of the softmax kernel, the same parameters can be used under both attention schemes while still targeting the same underlying operation.
This allows us to scale efficiently to large $\mathcal{D}^{\mathrm{aux}}$ without sacrificing exact attention over the task-specific context. The use of coupled TNP-like branches builds on ICICL-TNP \citep{pmlr-v253-ashman24a}; we provide a detailed comparison of the two architectures in Appendix~\ref{app:more_rel_work}.

\section{Experiments}
We structure the experiments as follows. In Section~\ref{sec:kinase_experiment}, we investigate the effect of meta-training task heterogeneity on meta-learning performance in a controlled environment.
In Section~\ref{exp:fs-mol}, we show the few-shot bioactivity prediction capabilities of \ourmethod{} on real-world bioactivity assays from ChEMBL \citep{gaulton2012chembl}.
In Section~\ref{exp:retro_bo}, we demonstrate its effectiveness for downstream compound prioritization through retrospective Bayesian optimization.
Finally, in Section~\ref{exp:ablation}, we ablate the auxiliary-conditioning mechanism and examine its computational cost.
All results are averaged over five random seeds and reported with standard errors.

\begin{figure}[t]
  \centering
\includegraphics[width=\linewidth]{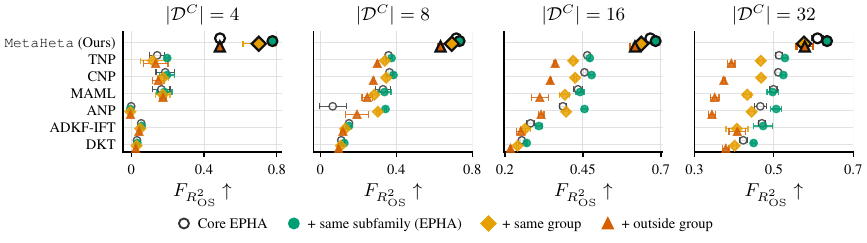}
  \caption{Effect of meta-training task heterogeneity on few-shot bioactivity prediction. Performance of all the considered methods generally deteriorates as $\mathcal D_{\mathrm{meta}}^{\mathrm{core}}$ is augmented with progressively less related tasks. \ourmethod{} remains the best-performing approach regardless of the training regime.}
  \label{fig:heterogeneity_exp}
\end{figure}

\subsection{Meta-learning under controlled assay heterogeneity}
\label{sec:kinase_experiment}
To study the effect of task heterogeneity in isolation, we construct a semi-synthetic benchmark based on kinase activity screening data from \citet{Drewry2017-pr}. 
Kinases provide a well-suited testbed: their common ATP-binding function is reflected in a highly conserved catalytic domain, while inhibitor response depends largely on variation at a small subset of active-site residues \citep{Sheridan2009-ua, Born2022-kk}. 
Coupled with the well-defined phylogenetic organization of the kinome, these properties provide a natural, biologically meaningful hierarchy of task relatedness.

\paragraph{Benchmark construction} We first fit a separate GP to each kinase's activity measurements and sample synthetic tasks from the resulting posteriors. 
We consider four meta-training regimes sharing a common meta-dataset $\mathcal D_{\text{meta}}^{\text{core}}$ of $5000$ tasks sampled from GPs corresponding to EPHA kinases, with the unaugmented $\mathcal D_{\text{meta}}^{\text{core}}$ serving as a baseline. 
The three additional regimes extend this common core, each with another $5000$ tasks sampled from GPs corresponding to: (i) EPHA kinases, (ii) non-EPHA tyrosine kinases (TK), and (iii) non-tyrosine kinases (non-TK), yielding progressively more heterogeneous task distributions, as reflected by relatedness measures reported in Table~\ref{tab:heterogeneity_desc} in Appendix~\ref{app:kinase_hetero}. 
Crucially, the $5000$ tasks added per regime differ only in the kinase-specific GP from which activities are sampled, so heterogeneity is induced solely through differences in the task-generating functions.
Evaluation is likewise fixed across regimes and performed on held-out validation and test sets, each comprising $250$ tasks sampled from GPs corresponding to EPHA kinases. A full description of the synthetic task-generation procedure is provided in Appendix~\ref{app:kinase_hetero}.
\begin{table}[b]
\centering
\caption{Few-shot prediction on EPHB test tasks after meta-training on $\mathcal{D}_{\mathrm{meta}}^{\mathrm{core}}$. \ourmethod{} matches or exceeds all baselines across both metrics and all context sizes.}
\label{tab:core_ephb}
\resizebox{1\textwidth}{!}{
\begin{tabular}{lcccccccc}
\toprule
& \multicolumn{2}{c}{Context size = 4}
& \multicolumn{2}{c}{Context size = 8}
& \multicolumn{2}{c}{Context size = 16}
& \multicolumn{2}{c}{Context size = 32} \\
\cmidrule(lr){2-3} \cmidrule(lr){4-5} \cmidrule(lr){6-7} \cmidrule(lr){8-9}
Method & F\textsubscript{NLL} $\boldsymbol{\uparrow}$ & $F_{R^2_{os}}$ $\boldsymbol{\uparrow}$ & F\textsubscript{NLL} $\boldsymbol{\uparrow}$ & $F_{R^2_{os}}$ $\boldsymbol{\uparrow}$ & F\textsubscript{NLL} $\boldsymbol{\uparrow}$ & $F_{R^2_{os}}$ $\boldsymbol{\uparrow}$ & F\textsubscript{NLL} $\boldsymbol{\uparrow}$ & $F_{R^2_{os}}$ $\boldsymbol{\uparrow}$ \\
\midrule
TNP
& $0.611 \pm 0.019$
& $-0.077 \pm 0.029$
& $0.827 \pm 0.004$
& $0.110 \pm 0.025$
& $0.900 \pm 0.002$
& $0.226 \pm 0.020$
& $0.931 \pm 0.001$
& $0.309 \pm 0.015$ \\
ANP
& $0.601 \pm 0.014$
& $-0.001 \pm 0.002$
& $0.812 \pm 0.004$
& $0.025 \pm 0.035$
& $0.899 \pm 0.001$
& $0.228 \pm 0.011$
& $0.931 \pm 0.001$
& $0.321 \pm 0.007$ \\
CNP
& $0.609 \pm 0.014$
& $0.124 \pm 0.010$
& $0.827 \pm 0.004$
& $0.156 \pm 0.004$
& $\mathbf{0.901 \pm 0.002}$
& $0.252 \pm 0.016$
& $0.930 \pm 0.001$
& $0.313 \pm 0.006$ \\
ADKF-IFT
& $0.262 \pm 0.027$
& $0.028 \pm 0.003$
& $0.760 \pm 0.007$
& $0.079 \pm 0.013$
& $0.882 \pm 0.002$
& $0.174 \pm 0.013$
& $0.925 \pm 0.002$
& $0.282 \pm 0.019$ \\
DKT
& $0.121 \pm 0.018$
& $0.021 \pm 0.002$
& $0.646 \pm 0.011$
& $0.071 \pm 0.002$
& $0.806 \pm 0.006$
& $0.158 \pm 0.012$
& $0.874 \pm 0.003$
& $0.285 \pm 0.010$ \\
MAML
& $-$
& $-0.101 \pm 0.119$
& $-$
& $0.108 \pm 0.045$
& $-$
& $0.257 \pm 0.013$
& $-$
& $0.314 \pm 0.023$ \\
\midrule
\ourmethod
& $\mathbf{0.829 \pm 0.015}$
& $\mathbf{0.501 \pm 0.018}$
& $\mathbf{0.882 \pm 0.004}$
& $\mathbf{0.346 \pm 0.053}$
& $\mathbf{0.901 \pm 0.004}$
& $\mathbf{0.327 \pm 0.084}$
& $\mathbf{0.934 \pm 0.003}$
& $\mathbf{0.333 \pm 0.041}$ \\
\bottomrule
\end{tabular}
}
\end{table}

\paragraph{Experimental setup}
We evaluate all methods in the few-shot regime at context sizes $|\mathcal{D}^C| \in \{4, 8, 16, 32\}$. 
We compare our approach against a set of strong probabilistic meta-learning methods, including members of the NP family, CNP, ANP, and TNP \citep{pmlr-v80-garnelo18a, kim2018attentive, pmlr-v162-nguyen22b}, as well as deep kernel-based methods, ADKF-IFT and DKT \citep{chen2023metalearning, NEURIPS2020_b9cfe8b6}. 
We additionally include MAML \citep{pmlr-v70-finn17a} as a widely used gradient-based baseline. Since the benchmark tasks are sampled from fitted GP posteriors, we can evaluate predictions relative to the corresponding true data-generating functions. 
We therefore report the fraction of oracle performance in terms of $R^2_{os}$ and NLL, denoted by $F_{R^2_{os}}$ and $F_{\mathrm{NLL}}$, respectively. For our method, $\mathcal{D}^{\mathrm{aux}}$ contains at most five same-target datasets, ordered by decreasing average $5$-nearest-neighbor Tanimoto similarity to the current task. Auxiliary datasets exceeding $512$ observations are randomly subsampled to that size. Full details are provided in Appendix~\ref{app:implementation-general}.

\paragraph{Assay heterogeneity can hurt meta-learning performance} The effect of increasing task heterogeneity is remarkably consistent across the considered methods and context sizes, as shown for $F_{R^2_{\mathrm{os}}}$ in Figure~\ref{fig:heterogeneity_exp}. 
Across all $28$ method--context combinations, augmenting $\mathcal D_{\mathrm{meta}}^{\mathrm{core}}$ with additional tasks from the same kinase subfamily improves $F_{R^2_{\mathrm{os}}}$ in 25 cases, whereas augmentation with tasks from other tyrosine and non-tyrosine kinases decreases it in 24 and 26 cases, respectively.
Moreover, the latter leads to a larger performance deterioration than the former in 23 comparisons.
The effects on $F_{\mathrm{NLL}}$ are smaller in magnitude but follow the same overall pattern, as shown in Figure~\ref{fig:heterogeneity_exp_nll} of Appendix~\ref{app:extended_results}. Same-subfamily augmentation improves performance in 20 of 24 method--context combinations; within-group and outside-group augmentations decrease it in 15 and 18 cases, respectively, and outside-group augmentation performs worse than within-group augmentation in 18 comparisons.
For both metrics, differences between regimes become more pronounced with increasing context size, as generally weaker performance at small context sizes can partially mask the effect of task heterogeneity. 
Overall, these results demonstrate that increasing meta-training task heterogeneity can impair meta-learning performance, despite the increase in available training data.

\begin{table}[t]
\centering
\caption{Performance comparison on FS-Mol. \ourmethod{} ranks best across seven of eight metric--context-size combinations, and ranks second on the remaining one.}
\label{tab:fs-mol}
\resizebox{1\textwidth}{!}{\begin{tabular}{lcccccccc}
\toprule
& \multicolumn{2}{c}{Context size = 4}
& \multicolumn{2}{c}{Context size = 8}
& \multicolumn{2}{c}{Context size = 16}
& \multicolumn{2}{c}{Context size = 32} \\
\cmidrule(lr){2-3} \cmidrule(lr){4-5} \cmidrule(lr){6-7} \cmidrule(lr){8-9}
Method & NLL $\boldsymbol{\downarrow}$ & $R^2_{os}$ $\boldsymbol{\uparrow}$ & NLL $\boldsymbol{\downarrow}$ & $R^2_{os}$ $\boldsymbol{\uparrow}$ & NLL $\boldsymbol{\downarrow}$ & $R^2_{os}$ $\boldsymbol{\uparrow}$ & NLL $\boldsymbol{\downarrow}$ & $R^2_{os}$ $\boldsymbol{\uparrow}$ \\
\midrule
TNP & 7.406 $\pm$ 2.519 & -0.070 $\pm$ 0.070 & 2.381 $\pm$ 0.216 & -0.048 $\pm$ 0.037 & 1.537 $\pm$ 0.026 & 0.153 $\pm$ 0.010 & 1.303 $\pm$ 0.028 & 0.199 $\pm$ 0.017 \\
ICICL-TNP
& 2.291 $\pm$ 0.112
& 0.069 $\pm$ 0.038
& 1.786 $\pm$ 0.022
& 0.117 $\pm$ 0.020
& 1.485 $\pm$ 0.007
& 0.155 $\pm$ 0.020
& 1.362 $\pm$ 0.044
& 0.131 $\pm$ 0.026 \\
ANP & 8.020 $\pm$ 4.795 & -0.048 $\pm$ 0.047 & 2.718 $\pm$ 0.265 & -0.071 $\pm$ 0.045 & 1.697 $\pm$ 0.075 & 0.056 $\pm$ 0.044 & 1.418 $\pm$ 0.098 & 0.131 $\pm$ 0.057 \\
CNP & 5.945 $\pm$ 2.766 & -0.176 $\pm$ 0.043 & 2.399 $\pm$ 0.229 & -0.013 $\pm$ 0.010 & 1.715 $\pm$ 0.028 & 0.063 $\pm$ 0.003 & 1.472 $\pm$ 0.029 & 0.120 $\pm$ 0.008 \\
ADKF-IFT & 22.605 $\pm$ 9.400 & 0.060 $\pm$ 0.001 & 4.509 $\pm$ 0.374 & 0.110 $\pm$ 0.005 & 2.235 $\pm$ 0.029 & 0.173 $\pm$ 0.003 & 1.582 $\pm$ 0.052 & $\mathbf{0.265 \pm 0.009}$ \\
DKT & 37.880 $\pm$ 18.236 & 0.022 $\pm$ 0.007 & 6.682 $\pm$ 0.564 & 0.068 $\pm$ 0.024 & 3.075 $\pm$ 0.066 & 0.148 $\pm$ 0.016 & 2.059 $\pm$ 0.070 & 0.224 $\pm$ 0.025 \\
TabICLv2 & - & -0.011 $\pm$ 0.006 & - & -0.008 $\pm$ 0.003 & - & -0.018 $\pm$ 0.003 & - & 0.010 $\pm$ 0.005 \\
TabPFNv3 & - & 0.026 $\pm$ 0.003 & - & 0.037 $\pm$ 0.003 & - & 0.073 $\pm$ 0.003 & - & 0.111 $\pm$ 0.001 \\
MAML & - & -0.214 $\pm$ 0.054 & - & -0.022 $\pm$ 0.025 & - & 0.050 $\pm$ 0.013 & - & 0.065 $\pm$ 0.007 \\
ST-MAML
& -
& -0.048 $\pm$ 0.005
& -
& 0.043 $\pm$ 0.005
& -
& 0.061 $\pm$ 0.003
& -
& 0.062 $\pm$ 0.006 \\
Frequent Hitters & - & 0.086 $\pm$ 0.007 & - & 0.095 $\pm$ 0.014 & - & 0.123 $\pm$ 0.010 & - & 0.131 $\pm$ 0.008 \\
Similarity Search & - & 0.012 $\pm$ 0.001 & - & 0.029 $\pm$ 0.001 & - & 0.041 $\pm$ 0.000 & - & 0.050 $\pm$ 0.000 \\
RF-$\mathcal{D}^{\mathrm{aux}}$
& -
& -0.228 $\pm$ 0.009
& -
& -0.196 $\pm$ 0.006
& -
& -0.194 $\pm$ 0.009
& -
& -0.213 $\pm$ 0.023 \\
\midrule
\textbf{\ourmethod{}} & $\mathbf{2.216 \pm 0.050}$ & $\mathbf{0.129 \pm 0.028}$ & $\mathbf{1.765 \pm 0.037}$ & $\mathbf{0.140 \pm 0.029}$ & $\mathbf{1.466 \pm 0.020}$ & $\mathbf{0.187 \pm 0.018}$ & $\mathbf{1.277 \pm 0.010}$ & 0.228 $\pm$ 0.011 \\
\rowcolor{gray!10}\ourmethod{} ($\gamma = 0.7$) & 2.325 $\pm$ 0.091 & 0.127 $\pm$ 0.026 & 1.778 $\pm$ 0.039 & 0.129 $\pm$ 0.026 & 1.476 $\pm$ 0.024 & 0.179 $\pm$ 0.016 & $\mathbf{1.277 \pm 0.012}$ & 0.223 $\pm$ 0.010 \\
\rowcolor{gray!10}\ourmethod{} ($\gamma = 0.8$) & 2.469 $\pm$ 0.124 & 0.119 $\pm$ 0.024 & 1.808 $\pm$ 0.036 & 0.112 $\pm$ 0.025 & 1.497 $\pm$ 0.029 & 0.170 $\pm$ 0.016 & 1.295 $\pm$ 0.025 & 0.218 $\pm$ 0.009 \\
\rowcolor{gray!10}\ourmethod{} ($\gamma = 0.9$) & 4.423 $\pm$ 1.938 & 0.098 $\pm$ 0.032 & 2.090 $\pm$ 0.163 & 0.074 $\pm$ 0.025 & 1.585 $\pm$ 0.051 & 0.151 $\pm$ 0.019 & 1.422 $\pm$ 0.076 & 0.204 $\pm$ 0.013 \\
\rowcolor{gray!10}\ourmethod{}-R & 2.438 $\pm$ 0.112 & -0.256 $\pm$ 0.164 & 1.959 $\pm$ 0.030 & -0.089 $\pm$ 0.076 & 1.606 $\pm$ 0.084 & 0.099 $\pm$ 0.045 & 1.363 $\pm$ 0.048 & 0.185 $\pm$ 0.013 \\
\bottomrule
\end{tabular}
}
\end{table}

\paragraph{Accounting for assay relatedness benefits few-shot meta-learning}
Across all heterogeneity regimes, \ourmethod{}, which explicitly accounts for task relatedness, achieves the best performance on both metrics at every context size. 
Notably, it retains this advantage even in a deliberately unfavorable cross-regime comparison: when trained under the most heterogeneous regime ($\mathcal D_{\mathrm{meta}}^{\mathrm{core}}$ augmented with non-TK tasks), it still outperforms baselines trained under the most favorable regime ($\mathcal D_{\mathrm{meta}}^{\mathrm{core}}$ augmented with additional EPHA tasks).
Even at the smallest context size $|\mathcal{D}^C| = 4$, \ourmethod{} already approaches oracle performance in terms of $F_{\mathrm{NLL}}$. 
This suggests that conditioning on data from related assays provides a valuable signal beyond the task-specific context, yielding informative predictive distributions even with very few observations from the target assay.

\paragraph{Auxiliary information transfers across homologous targets}
To examine cross-target transfer of auxiliary information, we retain the preceding setup with meta-training on $\mathcal D_{\mathrm{meta}}^{\mathrm{core}}$, but perform validation and testing on tasks from the EPHB kinase subfamily. Since these targets are not represented in $\mathcal D_{\mathrm{meta}}^{\mathrm{core}}$, no same-target assays are available. However, because the EPHA and EPHB subfamilies both belong to the EPH kinase family, we can use Proposition~\ref{prop:ashman_to_protfams} to construct $\mathcal D^{\mathrm{aux}}$ from the homologous EPHA tasks present in $\mathcal D_{\mathrm{meta}}^{\mathrm{core}}$ (see Appendix~\ref{app:implementation-ours} for full details).
The results in Table~\ref{tab:core_ephb} demonstrate that \ourmethod{} matches or exceeds all baselines across both metrics and all context sizes, with particularly large gains in the lowest-context regime.
This suggests that information from homologous targets can support prediction, particularly when task-specific data are limited.

\subsection{Meta-learning evaluation on real-world bioactivity assays}
\label{exp:fs-mol}

To assess whether the findings from the controlled assay-heterogeneity setting extend beyond the semi-synthetic benchmark, we evaluate \ourmethod{} on FS-Mol \citep{stanley2021fsmol}, a few-shot learning benchmark built from ChEMBL \citep{gaulton2012chembl} bioactivity assays.

\paragraph{Experimental setup} 
As in Section~\ref{sec:kinase_experiment}, we consider a highly data-scarce regime with $|\mathcal{D}^C| \in \{4, 8, 16, 32\}$. We construct $\mathcal{D}^{\mathrm{aux}}$ based on assay-description similarity with threshold $\gamma = 0.5$, selecting at most $5$ auxiliary datasets per task and subsampling each to at most 512 observations. 
We compare \ourmethod{} against previously introduced meta-learning baselines, together with ST-MAML \citep{pmlr-v180-wang22c}, a MAML variant designed to account for task heterogeneity, and ICICL-TNP \citep{pmlr-v253-ashman24a}, evaluated with the same $\mathcal{D}^{\mathrm{aux}}$ as our method.
We additionally include two state-of-the-art tabular foundation models, TabICLv2 \citep{qu2026tabiclv} and TabPFNv3 \citep{grinsztajn2026tabpfn3technicalreport}, as well as three non-meta-learning sanity checks: Frequent Hitters and Similarity Search \citep{schimunek2023contextenriched}, and RF-$\mathcal{D}^{\mathrm{aux}}$, 
a random forest fitted per target assay after combining its context with the observations in $\mathcal{D}^{\mathrm{aux}}$.
Since the underlying data-generating functions are unavailable, we report the standard $R^2_{os}$ and NLL where applicable. Details are in Appendix~\ref{app:implementation-general}.

\paragraph{Results}
Across the four context sizes and both metrics, our method achieves the best performance in $7$ out of $8$ comparisons. 
The gains are the largest in the most data-scarce regime and diminish as more task-specific context becomes available.
Consistent with the observations of \citet{schimunek2023contextenriched}, Frequent Hitters, which ignores the task context and instead learns from observations naively pooled across all assays in $\mathcal D_{\mathrm{meta}}$, outperforms several meta-learning approaches.
This suggests that standard meta-learners may struggle to exploit highly heterogeneous assay collections, particularly when only limited context data are available. 
In contrast, our approach is the only method that outperforms Frequent Hitters across all context sizes. 
Both tabular foundation models perform poorly, particularly at the smallest context sizes, providing further evidence that broad generic priors can be insufficient when limited context is available to identify the relevant task regime.
\ourmethod{} consistently outperforms ICICL-TNP, whose $R^2_{os}$ declines as the context size increases from $16$ to $32$ while ours improves, potentially reflecting the benefit of retaining exact self-attention over the context rather than processing it through pseudo-tokens.
Additionally, the poor performance of RF-$\mathcal{D}^{\mathrm{aux}}$ suggests that access to related auxiliary data alone is insufficient for effective prediction.

\begin{figure}
  \centering
\includegraphics[width=\linewidth]{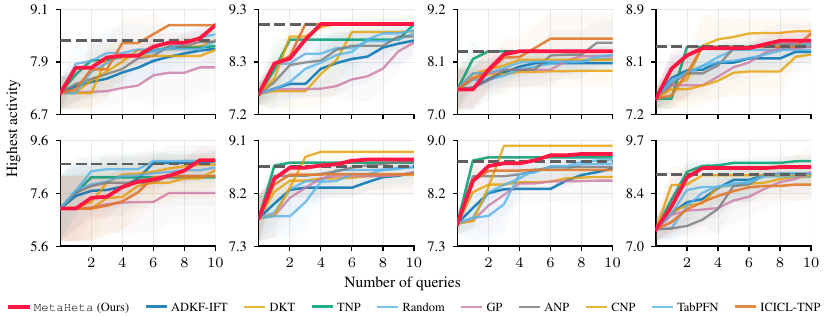}
    \caption{Retrospective Bayesian optimization trajectories on eight BindingDB assays. Dashed horizontal lines (\thresholdline) mark the activity threshold for the top 5\% of compounds in each assay; \ourmethod{} is the only method to identify such compounds in every assay.}
  \label{fig:bo}
\end{figure}

\subsection{Retrospective Bayesian optimization}
\label{exp:retro_bo}
To assess the practical utility of \ourmethod{}, we investigate whether its improved predictive performance in low-data settings translates into more effective compound prioritization when used as a surrogate model in retrospective Bayesian optimization on BindingDB assays \citep{10.1093/nar/gkae1075}.
\paragraph{Experimental setup}
We conduct the evaluation with initial conditions designed to resemble an early-stage compound-optimization scenario, in which activity measurements are available only for compounds belonging to a single, weakly active chemical series.
Specifically, for each of the eight considered assays and each random seed, we initialize optimization with four randomly selected compounds sharing a Bemis--Murcko scaffold \citep{10.1021/jm9602928} with median activity below the assay-specific 25th percentile.
We then perform 10 optimization rounds using the LogEI acquisition function \citep{ament2023unexpected}.
We compare \ourmethod{} against most of the probabilistic baselines introduced in Section~\ref{exp:fs-mol}, a task-specific GP with a Tanimoto kernel, and random selection. Further details on assay selection and candidate-pool construction are provided in Appendix~\ref{app:bo_details}.

\paragraph{Results}
As shown in Figure~\ref{fig:bo}, \ourmethod{} finds the most active compounds among the compared methods in three of the eight assays and ranks second in three others.
It is also the only approach to retrieve compounds that rank among the 5\% most active in each of the eight assays.
Table~\ref{tab:bo_rank} in Appendix~\ref{app:extended_results_bo} further demonstrates the strong and consistent performance of \ourmethod{} across all the considered assays: our approach achieves an average rank of $1.875$, leaving a substantial gap to the second-ranked method, TabPFN, with an average rank of $4.0$.
Notably, the task-specific GP ranks last in four of the eight assays, suggesting that fitting a surrogate independently from assay-specific observations alone can be unreliable in such a low-data regime.

\subsection{Ablation}
\label{exp:ablation}
In this section, we examine: (i) how the trade-off between predictive performance and inference time of \ourmethod{} depends on the number of auxiliary datasets and the choice of architecture; (ii) its sensitivity to the similarity threshold $\gamma$; and (iii) whether assay-description similarity provides a useful proxy for task relatedness.

\paragraph{Experimental setup}
We use the models trained on FS-Mol (Section~\ref{exp:fs-mol}). 
To answer (i), we compare predictive performance and inference time of \ourmethod{} and ICICL-TNP as we vary $|\mathcal{D}^{\mathrm{aux}}|$ at inference time, keeping $\gamma = 0$ and using the same auxiliary datasets for both methods.
For (ii), we evaluate stricter similarity thresholds by increasing $\gamma$ while retaining a maximum of five auxiliary datasets. 
Lastly, to address (iii), we evaluate \ourmethod{} with the default $\gamma = 0.5$ and a maximum of five auxiliary datasets, but shuffle the assignment of retrieved auxiliary assays to target assays. We denote this variant as \ourmethod{}-R.

\paragraph{Results}
Figure~\ref{fig:ablation} shows that conditioning on a single auxiliary dataset suffices to substantially improve $R^2_{os}$, with little further change as more datasets are added. Figure~\ref{fig:ablation_nll} in Appendix~\ref{app:extended_results_ablation} shows continued reductions in NLL, suggesting that additional datasets mainly refine uncertainty estimates rather than improve point predictions.
Although conditioning on a single auxiliary dataset increases inference time by approximately $2.3-2.5$ times, each additional dataset adds only around three seconds, resulting in $20\%$ overhead when expanding $\mathcal{D}^{\mathrm{aux}}$ from $1$ to $5$. 
We attribute this modest incremental cost to FAVOR+ attention, which allows our architecture to condition on five auxiliary datasets faster than ICICL-TNP does on a single one.
It also makes more effective use of the auxiliary signal, achieving higher mean $R^2_{os}$ across all context sizes and nonzero auxiliary-dataset counts.
Table~\ref{tab:fs-mol} shows that increasing $\gamma$ degrades performance, with the largest deterioration at $\gamma=0.9$, where many target assays lack even a single auxiliary assay satisfying the similarity criterion.
Notably, shuffling auxiliary-assay assignments in \ourmethod{}-R leads to comparatively modest increases in NLL but substantially degrades $R^2_{os}$, with the sharpest drops at small context sizes.
This provides strong evidence that assay-description similarity offers a useful basis for constructing $\mathcal D^{\mathrm{aux}}$ by capturing relatedness relevant to prediction, particularly when task-specific data are scarce.
\begin{figure}
  \centering
\includegraphics[width=\linewidth]{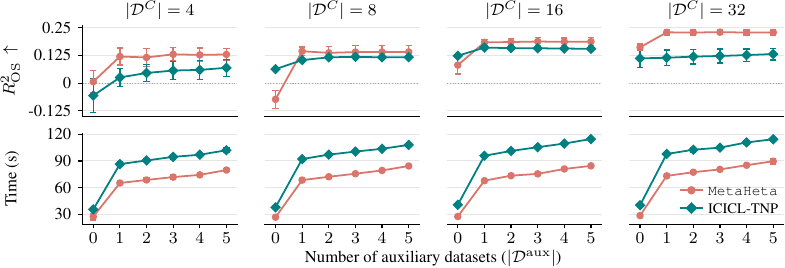}
  \caption{$R^2_{os}$ and inference time of \ourmethod{} and ICICL-TNP with varying cardinality of $\mathcal{D}^{\mathrm{aux}}$. Auxiliary conditioning improves predictive performance with modest incremental cost. Our architecture processes five auxiliary datasets faster than ICICL-TNP processes a single one.}
  \label{fig:ablation}
\end{figure}

\section{Conclusion}
In this work, we first investigate the effect of meta-training task heterogeneity on few-shot bioactivity prediction in a controlled setting, and show that increasing heterogeneity can systematically degrade meta-learning performance across methods and context sizes.
To address this, we introduce \ourmethod{}, a meta-learning framework which explicitly accounts for assay heterogeneity by conditioning predictions on auxiliary data from related assays. 
Its architecture enables this auxiliary conditioning to scale efficiently to large datasets while preserving exact attention over scarce task-specific context.
We further demonstrate the effectiveness of our method under three notions of assay relatedness: shared target identity, phylogenetic relationships between targets, and assay-description similarity.
Finally, we demonstrate the effectiveness of \ourmethod{} on bioactivity assays from ChEMBL and BindingDB, improving both few-shot prediction and downstream compound prioritization through retrospective Bayesian optimization.
We discuss the limitations of our approach and directions for future work in Appendix~\ref{app:discussion}.
Taken together, our results identify assay heterogeneity as an important challenge for few-shot bioactivity meta-learning and show that conditioning on related assays provides an effective and scalable way to address it.

\subsubsection*{Acknowledgments}
\begin{wrapfigure}[5]{r}{0.2\textwidth}
    \centering
    \includegraphics[width=\linewidth]{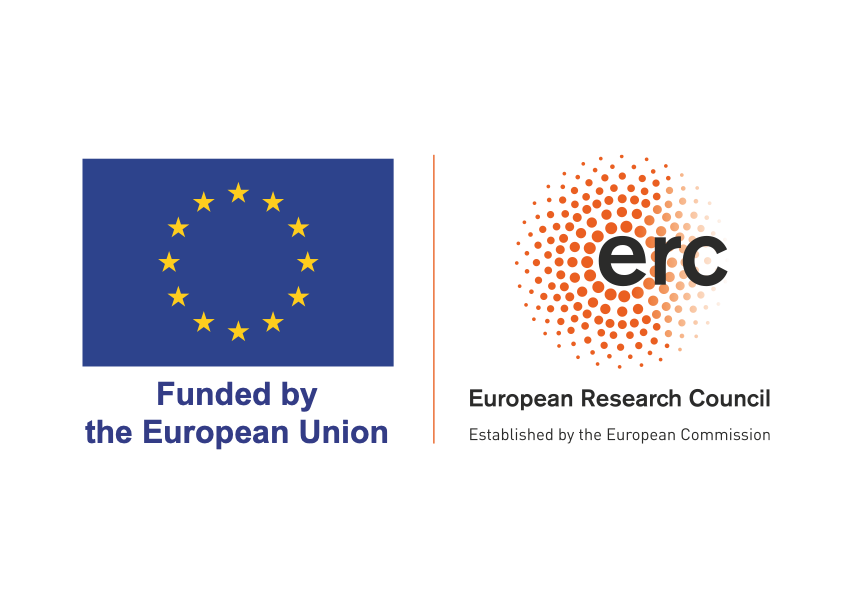}
\end{wrapfigure}
MK thanks Max from HPC for his invaluable contributions to maintaining the author’s caloric intake and, consequently, productivity.
VF was supported by the Branco Weiss Fellowship.
This project has received funding from the European Research Council (ERC) under the European Funding Union’s Horizon 2020 research and innovation programme (grant agreement No 101125506–DOG-AMP).

\bibliography{references}
\bibliographystyle{iclr2027_conference}

\newpage
\appendix
\section{Experimental details}
\label{app:general_experimental_details}

\subsection{Controlled-heterogeneity benchmark construction}
\label{app:kinase_hetero}

\paragraph{Data preparation}
We construct the semi-synthetic kinase benchmark from the PKIS2 profiling data of \citet{Drewry2017-pr}, comprising 645 compounds from 86 chemotypes screened against 392 wild-type human kinases. We first restrict the panel to the kinases considered in our benchmark and preprocess the molecular structures with RDKit \citep{landrum2006rdkit} by selecting the parent fragment, neutralizing charges, and canonicalizing tautomers.
Activity values are transformed from the unit interval to logit space after clipping extreme values. Molecules are represented using 1024-bit Morgan fingerprints with radius 2.

\paragraph{Kinase-specific GP fitting} 
For each kinase, we standardize the transformed activity values and fit an exact Gaussian process with a constant mean and a scaled Tanimoto kernel over Morgan fingerprints. This yields a separate posterior distribution over activity functions for each kinase.
We assess each kinase-specific GP using leave-one-chemotype-out cross-validation. In each fold, one chemotype is held out, the GP is refitted on the remaining compounds, and predictive performance is compared against a baseline that predicts the mean activity of the training fold. We retain only kinases with positive balanced leave-one-chemotype-out.

\paragraph{Molecule generation}
To obtain a sufficiently large molecular domain for synthetic task generation, we expand the original kinase-panel chemistry using CReM \citep{Polishchuk2020} mutation and growth operations. Generated molecules are filtered to satisfy basic drug-like property constraints, requiring molecular weight between \(150\) and \(650\), \(\log P\) between \(-1\) and \(7\), topological polar surface area below \(180\), at most \(6\) hydrogen-bond donors, \(12\) hydrogen-bond acceptors, and \(15\) rotatable bonds. We further retain only molecules whose maximum Tanimoto similarity to any molecule in the original panel lies between \(0.40\) and \(0.60\), yielding approximately \(10^6\) candidate molecules. These molecules are then partitioned into training, validation, and test molecular domains using an \(80/10/10\) Bemis--Murcko scaffold split, with scaffolds kept disjoint across the three partitions.

\paragraph{Task construction}
Each synthetic assay is generated by first sampling its size from a PERT distribution with minimum \(40\), mode \(100\), maximum \(1024\), and concentration \(20\). We then sample the corresponding number of molecules from the relevant molecular split and select a kinase-specific GP. Activities are generated by drawing jointly from the GP posterior over the selected molecules and adding Gaussian observation noise, with the fitted observation-noise standard deviation scaled by \(0.5\). 
Using this procedure, we first generate a core meta-dataset of \(5000\) tasks from GPs corresponding to EPHA kinases. We then generate an additional \(5000\) tasks for each heterogeneity regime while keeping the sampled task sizes and molecular inputs fixed across regimes. Only the kinase-specific GP used to generate the activity values is varied, with source GPs corresponding to (i) EPHA kinases, (ii) non-EPHA tyrosine kinases, or (iii) non-tyrosine kinases. We additionally generate \(250\) validation and \(250\) test assays from EPHA kinase GPs. For the homologous-target transfer experiment, we generate \(250\) validation and \(250\) test assays from EPHB kinase GPs, while keeping training restricted to EPHA tasks.

\paragraph{Kinase relatedness across regimes}
To quantify the increasing relatedness gap across the three kinase regimes, we consider two measures: similarity of kinase activity profiles across the PKIS2 compound panel and sequence divergence at the 29 active-site residues identified by \citet{Sheridan2009-ua}, with the resulting values reported in Table~\ref{tab:heterogeneity_desc}.
\begin{table}[H]
    \centering
    \caption{Quantitative characterization of kinase relatedness across heterogeneity regimes. $\rho_{\mathrm{act}}$: average pairwise Pearson correlation between kinase activity profiles over the PKIS2 compound panel; $d_{\mathrm{site}}$: average pairwise normalized Hamming distance across the 29 active-site residues identified by \citet{Sheridan2009-ua}.}
    \label{tab:heterogeneity_desc}
    \begin{tabular}{lcc}
        \toprule
        Regime & $\rho_{\mathrm{act}}$ & $d_{\mathrm{site}}$ \\
        \midrule
        EPHA & 0.466 & 0.149 \\
        TK & 0.316 & 0.367 \\
        EPHA + non-TK & 0.181 & 0.512 \\
        \bottomrule
    \end{tabular}
\end{table}

\subsection{Retrospective Bayesian optimization}
\label{app:bo_details}
\paragraph{Data preparation} We curate assays from BindingDB \citep{10.1093/nar/gkae1075} for retrospective Bayesian optimization experiments. Molecular structures are first processed with RDKit \citep{landrum2006rdkit} using the same procedure as described in Appendix~\ref{app:kinase_hetero}. We restrict the data to measurements against human, single-chain protein targets and remove target names associated with more than one UniProt \citep{10.1093/nar/gkae1010} identifier or more than one target-chain sequence, ensuring an unambiguous mapping between target name and protein identity. We further retain only BindingDB entries for which all measurements associated with a given DOI correspond to a single target. We consider affinity measurements reported as \(K_i\), \(\mathrm{IC}_{50}\), or \(K_d\), and retain only observations for which exactly one of these three endpoints is available, discarding observations with either no usable measurement or multiple reported endpoints. The remaining measurements are converted to a common \(p\)-activity scale as \(Y = 9-\log_{10}(x)\), where \(x\) is the reported affinity in nM. Finally, compounds with molecular weight above 900 Da are removed.

\paragraph{Assay selection} We select assays that are sufficiently large and chemically diverse to support retrospective compound optimization while retaining a non-degenerate activity landscape. Specifically, we require each assay to contain between 500 and 2000 compounds and at least 30 Bemis–Murcko scaffolds \citep{10.1021/jm9602928} represented by at least five compounds each. We further exclude assays for which any single activity value accounts for more than $25\%$ of the measurements. Finally, since the optimization experiments are commonly initialized from a single weakly active chemical series, we require each assay to contain at least five eligible starting scaffolds, defined as Bemis–Murcko scaffolds represented by at least five compounds whose median activity lies at or below the assay-specific 25th percentile. This ensures that each retained assay admits multiple chemically distinct low-activity starting series from which to simulate early-stage compound optimization. 
Since BindingDB contains records originating from both ChEMBL and the patent literature, we restrict assay selection to those derived from the latter to avoid potential overlap with the ChEMBL-based FS-Mol training data.
We conduct the experiments on eight assays satisfying all the above criteria with the following identifiers: \texttt{10.7270/Q2348QXR}, \texttt{10.7270/Q2N018M8}, \texttt{10.7270/Q2S46W6V}, \texttt{10.7270/Q2FN152J},
\texttt{10.7270/Q2BZ68G7},
\texttt{10.7270/Q2B27XNR},
\texttt{10.7270/Q2TB1B01}, and
\texttt{10.7270/Q2PV6PFT}.

\paragraph{Assay descriptions}
Assay descriptions were generated using GPT-5.6 Sol \citep{openai2026gpt56} by parsing the patent document corresponding to each of the eight BindingDB entries with the following prompt:
\begin{quote}
    Given the provided patent, write a single concise paragraph describing the experimental assay used to measure compound bioactivity. Include, when stated, the biological target and relevant construct or species, assay type and biological context (e.g., biochemical, cellular, binding, functional), key reagents or substrates, experimental conditions such as concentrations, incubation times, temperature, buffer or medium, the measurement technology or equipment, and the measured endpoint or readout. Describe only information explicitly supported by the source and do not infer missing experimental details. Do not report or summarize compound-specific activity values, potency measurements, active/inactive classifications, hit rates, structure–activity relationships, rankings, or conclusions about which compounds performed well or poorly. The description should characterize how the assay was performed and what it measured, independently of the observed compound outcomes. Keep the output to one short paragraph.
\end{quote}
The resulting descriptions used in the experiment were:
\begin{itemize}

    \item \texttt{10.7270/Q2348QXR}: A biochemical ATPase assay measured ATP hydrolysis by recombinant human DHX9 in the presence of an annealed double-stranded RNA substrate. The 10-\(\mu\)L reactions used DHX9 delivered at 1.25 nM in 40 mM HEPES, pH 7.5, 0.01\% Tween-20, 0.01\% BSA, 1 mM DTT, 5 mM MgCl$_2$, and RNase inhibitor and were initiated with 30 nM double-stranded RNA and 10 \(\mu\)M ATP. After one hour at room temperature, ADP-Glo reagent was added for 40 minutes, followed by kinase-detection reagent for another 40 minutes. Luminescence was recorded on an EnVision plate reader, providing an ADP-production readout of DHX9 ATPase activity.

    \item \texttt{10.7270/Q2N018M8}: Endothelial and hepatic lipase activity was measured biochemically using the fluorescent FRET substrate A10070 incorporated into DMPG vesicles. Endothelial lipase was supplied in conditioned medium from HT-1080 cells engineered to overexpress endogenous EL, while hepatic lipase was partially purified from conditioned medium of HL-expressing COS cells. Enzyme was preincubated with compound for 30 minutes at 37\(^\circ\)C in 50 mM HEPES, pH 8.0, 50 mM NaCl, and 1 mM CaCl$_2$, after which diluted substrate vesicles initiated the reaction. Fluorescence was recorded for 60 minutes on a Gemini plate reader at 490-nm excitation and 530-nm emission, and the slope between 300 and 900 seconds represented the reaction rate.

    \item \texttt{10.7270/Q2S46W6V}: A fluorescence-polarization competition assay measured binding of recombinant eIF4E to the fluorescent cap analogue EDA-m$^7$GDP-ATTO-550. Final reactions contained 75 nM eIF4E, 20 nM fluorescent ligand, and 1\% DMSO in 20 mM Tris, pH 7.5, 100 mM NaCl, 1 mM DTT, and 0.02\% Tween-20. Protein and ligand were preincubated for five minutes before compound addition, and 100-\(\mu\)L reactions in black half-area 96-well plates were equilibrated for 30 minutes at room temperature while protected from light. A PerkinElmer Victor 2 multi-label counter measured fluorescence polarization as the competitive-binding readout.

    \item \texttt{10.7270/Q2FN152J}: An isolated-enzyme HTRF kinase assay measured phosphorylation by the recombinant His-tagged cytoplasmic domain of TrkA/NTRK1, residues 441--796, using the Cisbio KinEASE-TK kit. The assay used 4 nM TrkA, 40 \(\mu\)M ATP, and kit buffer supplemented with 10 mM MgCl$_2$ and 2 mM DTT; enzyme and ATP were preactivated for 20 minutes at room temperature, exposed to compound for 20 minutes, and then incubated with 1 \(\mu\)M biotinylated tyrosine-kinase substrate for 35 minutes. Detection buffer containing streptavidin-XL665 and a cryptate-labelled phosphotyrosine antibody stopped the reaction, followed by 60 minutes at room temperature and EnVision measurement of the 665/620-nm emission ratio.

    \item \texttt{10.7270/Q2BZ68G7}: A biochemical radiometric methyltransferase assay measured incorporation of tritium from $^3$H-SAM into GST-tagged MEKK2 by purified full-length human SMYD3. Compounds and 0.4 nM SMYD3 were preincubated for 30 minutes at room temperature in 25 mM Tris-Cl, pH 8.0, 1 mM TCEP, 0.005\% bovine skin gelatin, and 0.005\% Tween-20; 8 nM $^3$H-SAM and 12 nM MEKK2 then initiated the reaction in 2\% DMSO. Non-radiolabelled SAM stopped incorporation, and radiolabelled MEKK2 was precipitated onto scintillation-proximity beads in 0.5 M citric acid. A PerkinElmer TopCount reader measured incorporated radioactivity as disintegrations or counts per minute.

    \item \texttt{10.7270/Q2B27XNR}: A biochemical time-resolved fluorescence assay measured ERK2-catalysed phosphorylation of a truncated ATF2 substrate fused to GFP. Reactions contained 0.25 nM ERK2, 0.4 mM ATF2-GFP, 20 \(\mu\)M ATP, 50 mM Tris at pH 7.5, 10 mM MgCl$_2$, 1 mM EGTA, 0.01\% Triton X-100, 1 mM DTT, and 2.5\% DMSO and proceeded for 30 minutes at room temperature. Reactions were stopped with TR-FRET dilution buffer containing 25 mM EDTA and 2 nM terbium-labelled anti-phospho-ATF2 (Thr71), incubated for at least another 30 minutes, and read on a Pherastar at 340-nm excitation and 520/495-nm emission; the emission ratio was the phosphorylation readout.

    \item \texttt{10.7270/Q2TB1B01}: A biochemical time-resolved fluorescence assay measured ERK2 phosphorylation of truncated ATF2-GFP. Reactions contained 0.25 nM ERK2, 0.4 \(\mu\)M ATF2-GFP, 20 \(\mu\)M ATP, 50 mM Tris at pH 7.5, 10 mM ``MgC$_2$'' as printed in the patent, 1 mM EGTA, 0.01\% Triton X-100, 1 mM DTT, and 2.5\% DMSO and were incubated for 30 minutes at room temperature. TR-FRET dilution buffer, 25 mM EDTA, and 2 nM terbium-labelled anti-phospho-ATF2 (Thr71) stopped the reaction; after at least 30 additional minutes, a Pherastar measured fluorescence at 520 and 495 nm following 340-nm excitation, and the emission ratio represented phosphorylation.

    \item \texttt{10.7270/Q2PV6PFT}: A low-ATP biochemical kinase assay used recombinant full-length N-terminally His-tagged human TBK1 expressed in insect cells and a biotinylated peptide substrate, biotin-Ahx-GDEDFSSFAEPG-amide. TBK1 and compound were preincubated for 15 minutes at 22\(^\circ\)C in HEPES buffer containing MgCl$_2$, DTT, BSA, Nonidet-P40, and protease inhibitors; phosphorylation then proceeded for 30 minutes at 22\(^\circ\)C with 10 \(\mu\)M ATP and 1 \(\mu\)M peptide. EDTA stopped the reaction, and streptavidin-XL665, an anti-phosphoserine antibody, and europium-labelled anti-mouse IgG were added for one hour. A Pherastar or Viewlux TR-FRET reader measured 620- and 665-nm emissions after 350-nm excitation, with their ratio representing phosphorylated substrate.

\end{itemize}

\subsection{Metrics}
\label{app:fs-mol-metrics}
We evaluate all methods using negative log-likelihood (NLL), where applicable, and the out-of-sample coefficient of determination \(R^2_{\mathrm{os}}\), following \citet{chen2023metalearning}. For a task with context set \(\mathcal{D}^C=(X^C,Y^C)\) and target set \(\mathcal{D}^T=(X^T,Y^T)\), we compute
\[
R^2_{\mathrm{os}}
=
1-
\frac{
    \sum_{n=1}^{N_t}
    \left(y_n^T-\hat y_n^T\right)^2
}{
    \sum_{n=1}^{N_t}
    \left(y_n^T-\bar y^C\right)^2
},
\qquad
\bar y^C
=
\frac{1}{N_c}\sum_{n=1}^{N_c}y_n^C,
\]
where \(\hat y_n^T\) denotes the predictive mean at target input \(x_n^T\). 

Since the synthetic assays in Section~\ref{sec:kinase_experiment} are sampled from known kinase-specific GP posteriors, we have access to their corresponding data-generating functions. This allows us to compute an exact GP oracle by conditioning the generating GP on the same context set \(\mathcal{D}^C\) available to each method. The oracle uses the same observation-noise model as the synthetic data-generation process.
We report performance relative to this oracle. We further define:
\begin{align}    
F_{R^2_{\mathrm{os}}}
=
\frac{
R^2_{\mathrm{os}}
}{
R^{2,\mathrm{oracle}}_{\mathrm{os}}
}.
\end{align}
For NLL, we instead compare the corresponding average predictive densities:
\begin{align}    
F_{\mathrm{NLL}}
=
\frac{
\exp(-\mathrm{NLL})
}{
\exp(-\mathrm{NLL}_{\mathrm{oracle}})
}
=
\exp\!\left(
\mathrm{NLL}_{\mathrm{oracle}}-\mathrm{NLL}
\right).
\end{align}
Since \(\exp(-\mathrm{NLL})\) is the geometric mean predictive density, \(F_{\mathrm{NLL}}\) has a direct interpretation as the ratio between the geometric mean predictive density of a method and that of the oracle. In both cases, a value of \(1\) corresponds to oracle-level performance.

\subsection{Implementation details}
\label{app:implementation-general}
We evaluate all methods at context sizes \(|\mathcal{D}^C|\in\{4,8,16,32\}\). We largely follow \citet{stanley2021fsmol}, but deviate in how context size is handled during training and validation. Specifically, for each context size, we train and validate a separate model using that same fixed context size, and evaluate it only at the corresponding context size. Given the highly data-scarce regime considered here, this ensures that every method is optimized for the same task-specific data budget on which it is evaluated, and avoids conflating few-shot performance with generalization across context sizes. This also places amortized methods, which can naturally accommodate varying context sizes, on the same footing as prior FS-Mol evaluations in which methods are trained using a fixed support-set size \citep{stanley2021fsmol, chen2023metalearning}. For each evaluation assay, we sample \(10\) context--target splits and average evaluation metrics across splits before averaging across assays. 
For the controlled-heterogeneity experiments, all methods are given the same molecular representation used by the task-generating GPs, namely 1024-bit radius-2 Morgan fingerprints. This ensures that differences in performance relative to the GP oracle cannot be attributed to a mismatch in molecular representation. 
For the experiments on FS-Mol, we do not enforce a single representation across methods: we align representations where appropriate, while retaining method-specific feature extractors when they are part of the original approach or using alternative representations when these perform better. This is because imposing a single shared representation may disadvantage particular baselines whose performance is strongly tied to the choice of molecular representation.
For the retrospective Bayesian optimization experiments, we train models with the context size sampled uniformly from \(\{4,\ldots,32\}\). During validation, we evaluate each model on three independently sampled context--target splits at each context size \(|\mathcal{D}^C|\in\{4,8,16,32\}\), and average performance across all splits and context sizes.

\subsubsection{\ourmethod{}}
\label{app:implementation-ours}
\paragraph{Controlled-heterogeneity experiment}
Our model is trained with meta-batch size \(32\) for \(10\) epochs using AdamW \citep{loshchilov2018decoupled} with learning rate \(10^{-4}\), weight decay \(0.01\) and a cosine learning-rate schedule with \(10\%\) warm-up. 
We evaluate on the validation set after each epoch and retain the checkpoint with the lowest validation NLL.
We use at most \(5\) auxiliary datasets per task and subsample each auxiliary dataset to \(512\) observations. The input embedder consists of \(2\) layers with hidden dimension \(128\), and the output decoder consists of a single layer with dimension \(64\). The main network contains \(6\) attention layers with embedding dimension \(128\), using \(8\) attention heads with head dimension \(16\), and a dropout rate of \(0.25\).

\paragraph{FS-Mol experiment}
For FS-Mol, we use a meta-batch size of \(16\) and train for \(100\) epochs with learning rate \(3\times10^{-4}\). 
We evaluate on the validation set after each epoch and retain the checkpoint with the lowest validation NLL.
The architecture is increased to an embedding and feed-forward dimension of \(256\), decoder dimension \(128\), \(8\) attention layers, and head dimension \(32\), while retaining \(8\) attention heads, a \(2\)-layer input embedder, and a \(1\)-layer decoder. We use at most \(5\) auxiliary datasets per task and subsample each to at most \(512\) observations, with cosine-similarity threshold $\gamma = 0.5$. For molecular representations, we use Hyformer embeddings \citep{izdebski2026synergistic}.

\paragraph{Auxiliary dataset construction}
The construction of $\mathcal{D}^{\mathrm{aux}}$ in \ourmethod{} is adapted to the notion of assay relatedness available in each experimental setting:
\begin{itemize}
    \item Controlled-heterogeneity experiment. 
    We construct $\mathcal{D}^{\mathrm{aux}}$ according to Eq.~\ref{eq:icicl_main_theorem}, selecting at most five same-target datasets, ordered by decreasing average $5$-nearest-neighbor Tanimoto similarity to the current task. The similarity is computed exclusively from molecular inputs; target outputs are never used in auxiliary-dataset selection. Auxiliary datasets containing more than $512$ observations are randomly subsampled to that size.
    \item Homologous-target transfer experiment. For the EPHB evaluation, where same-target auxiliary assays are unavailable, we instead construct $\mathcal{D}^{\mathrm{aux}}$ according to Proposition~\ref{prop:ashman_to_protfams}. For each such EPHB assay, we sample five auxiliary tasks from the pool of assays targeting any member of the EPHA subfamily and order them by decreasing activity variance. Each auxiliary dataset is again subsampled to at most $512$ observations.
    \item FS-Mol \& retrospective Bayesian optimization.
    We construct $\mathcal{D}^{\mathrm{aux}}$ using assay-description similarity, as described in Algorithm~\ref{algo}, retaining at most five auxiliary datasets per task and subsampling each to at most $512$ observations.
\end{itemize}

\subsubsection{Baselines}
\label{app:implementation-baselines}
Implementations of the baseline methods were based primarily on the code released by \citet{chen2023metalearning}, \citet{stanley2021fsmol}, and implementations of several Neural Process variants available at \url{https://github.com/cambridge-mlg/tnp}, released under the MIT License.
\begin{itemize}
    \item Transformer Neural Process (TNP) \citep{pmlr-v162-nguyen22b}. We use the same architecture and training setup as for our model. In particular, setting the number of auxiliary datasets to zero reduces our model to the standard TNP-D (using the efficient implementation described by \citet{feng2023latent}).
    \item Conditional Neural Process (CNP) \citep{pmlr-v80-garnelo18a}. For FS-Mol, we use mean aggregation and a \(256\)-dimensional context representation, with a decoder hidden dimension of \(256\) and with Hyformer embeddings as molecular representations. For the controlled-heterogeneity experiment we reduce both the context representation and decoder hidden dimensions to \(128\).
    \item Attentive Neural Process (ANP) \citep{kim2018attentive}. We match the TNP baseline in architectural parameters, including the embedding, feed-forward, decoder, and attention dimensions, as well as the number of attention heads and layers, while omitting the latent path.
    \item Adaptive Deep Kernel Fitting with Implicit Function Theorem (ADKF-IFT) \citep{chen2023metalearning}. For FS-Mol, we retain the feature extractor used in the original implementation, combining ECFP fingerprints with a GNN-MT from \citet{stanley2021fsmol}. For the controlled-heterogeneity experiment, we replace the GNN with a two-layer MLP with hidden and output dimensions of \(1024\), operating on the same \(1024\)-bit fingerprints used by the GP oracle.
    \item Deep Kernel Transfer (DKT) \citep{NEURIPS2020_b9cfe8b6}. We use the implementation provided by \citet{chen2023metalearning} and follow the same experimental setup and feature-extractor choices as for ADKF-IFT.
    \item Model-Agnostic Meta-Learning (MAML) \citep{pmlr-v70-finn17a}. We use the implementation from \citet{stanley2021fsmol} based on GNN-ST. For the controlled-heterogeneity experiment, we replace the GNN with the same two-layer MLP used for ADKF-IFT and DKT. The regression head for the kinase experiments is an MLP with hidden dimensions \(256\) and \(64\).
    \item ST-MAML \citep{pmlr-v180-wang22c}. We preserve the MAML setup, adding a stochastic task encoder with hidden dimension 128, a 32-dimensional latent representation, and a 32-dimensional task feature. We weight the KL regularization term by 0.1 and average predictions over five context-conditioned latent samples at evaluation.
    \item Frequent Hitters \citep{schimunek2023contextenriched}. Frequent Hitters is simply an MLP trained on data pooled across all training assays, without using the support set of the target assay. Following a hyperparameter sweep, we use four hidden layers of width \(16\), batch size \(256\), learning rate \(10^{-3}\), no dropout or weight decay, and train for up to \(100\) epochs with an early-stopping patience of \(10\) epochs. Molecular inputs are represented using ECFP fingerprints.
    \item Similarity Search. As an additional sanity-check baseline proposed by \citet{schimunek2023contextenriched}, we use Morgan fingerprints with radius \(2\) and \(1024\) bits, and predict each target as the Tanimoto-similarity-weighted average of the context labels.
    \item TabPFNv3 \citep{grinsztajn2026tabpfn3technicalreport}. We use Morgan fingerprints with radius \(2\) and \(1024\) bits as molecular representations, which we found to perform better than Hyformer embeddings.
    \item TabICLv2 \citep{qu2026tabiclv}. We use the same molecular representation as for TabPFNv3.
    \item ICICL-TNP \citep{pmlr-v253-ashman24a}. Across all settings, we always construct $\mathcal{D}^{\mathrm{aux}}$ in the same way as for \ourmethod{}, such that both models condition on exactly the same auxiliary data. As in \ourmethod{}, we share the projection parameters between the corresponding MHSA and MHCA blocks of the context and auxiliary branches. We do not include dataset-of-origin one-hot indicators, as auxiliary datasets are represented separately through their pseudo-token sets. We use 64 pseudo-tokens for each auxiliary dataset and set the number of context pseudo-tokens equal to the context size, $|\mathcal{D}^{C}|$.
    \item RF-$\mathcal{D}^{\mathrm{aux}}$. We construct $\mathcal{D}^{\mathrm{aux}}$ in the same way as for \ourmethod{} and then concatenate its datasets with the task-specific context set. A random forest with 250 trees is then fit to that combined dataset.
\end{itemize}

\subsection{Computational resources}
A single training run of \ourmethod{} was performed on an NVIDIA H100 GPU and required approximately 24 hours. The runtime ablation in Section ~\ref{exp:ablation} was performed on an NVIDIA Tesla V100S 32\,GB GPU with an Intel Xeon Gold 6226R CPU @ 2.90\,GHz.

\section{Extended results}
\label{app:extended_results}

\subsection{Controlled-heterogeneity experiment}
\label{app:extended_results_hetero}

\begin{figure}[H]
  \centering
\includegraphics[width=\linewidth]{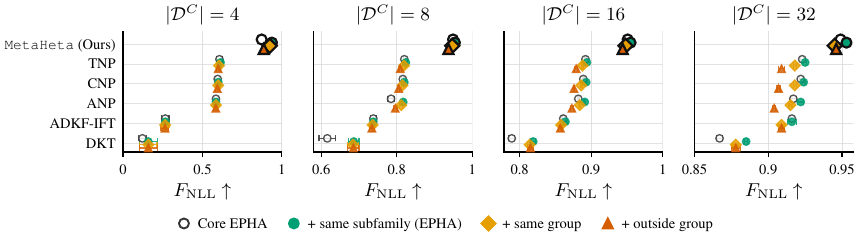}
    \caption{Effect of meta-training task heterogeneity on few-shot bioactivity prediction. Performance of all the considered methods generally deteriorates as $\mathcal D_{\mathrm{meta}}^{\mathrm{core}}$ is augmented with progressively less related tasks. \ourmethod{} remains the best-performing approach regardless of the training regime.}
  \label{fig:heterogeneity_exp_nll}
\end{figure}

\subsection{Retrospective bayesian optimization}
\label{app:extended_results_bo}

\begin{table}[H]
    \centering
    \caption{Average rank across retrospective Bayesian optimization tasks. For each assay, methods are ranked according to their final best-observed activity averaged over five random seeds; ranks are then averaged across the eight assays.}
    \label{tab:bo_rank}
    \begin{tabular}{lc}
        \toprule
        Method & Average rank \\
        \midrule
        TNP       & 4.375 \\
        ANP       & 5.375 \\
        CNP       & 5.375 \\
        ADKF-IFT  & 7.250 \\
        DKT       & 6.625 \\
        TabPFN    & 4.000 \\
        ICICL-TNP & 5.000 \\
        Random    & 6.375 \\
        GP        & 7.750 \\
        \midrule
        \ourmethod{} & \textbf{1.875} \\
        \bottomrule
    \end{tabular}
\end{table}

\subsection{Ablation}
\label{app:extended_results_ablation}

\begin{figure}[H]
  \centering
\includegraphics[width=\linewidth]{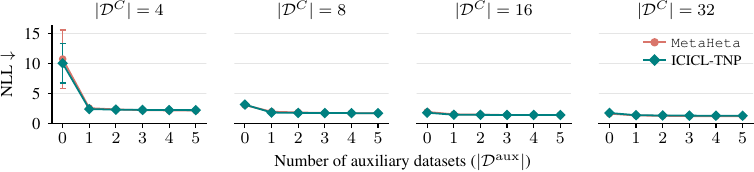}
  \caption{NLL of \ourmethod{} and ICICL-TNP with varying cardinality of $\mathcal{D}^{\mathrm{aux}}$.}
  \label{fig:ablation_nll}
\end{figure}

\section{Algorithms}
Algorithm~\ref{algo} outlines the core meta-training procedure, with $\mathcal{D}^{\mathrm{aux}}$ constructed based on assay-description similarity. For clarity, auxiliary-dataset limits, ordering, and subsampling are omitted; full experimental details are provided in Appendix~\ref{app:general_experimental_details}.

\begin{algorithm}[H]
\caption{Description-based auxiliary dataset selection and meta-training}
\label{algo}
\DontPrintSemicolon
\KwIn{
Meta-dataset $\mathcal{D}_{\mathrm{meta}}=\{\mathcal{D}_\tau\}_{\tau=1}^M$,
assay descriptions $\{a_\tau\}_{\tau=1}^M$,
meta-learner $p_\theta$,
embedding model $\phi$,
similarity threshold $\gamma$
}
\BlankLine
Compute assay-description embeddings: \linebreak
$h_\tau \leftarrow \phi(a_\tau)
\qquad \forall \tau \in \{1,\ldots,M\}$\;
\BlankLine
Compute pairwise assay similarities: \linebreak
$s_{\tau,\tau'}
\leftarrow
\frac{\langle h_\tau,h_{\tau'}\rangle}
{\|h_\tau\|\,\|h_{\tau'}\|}
\qquad
\forall \tau,\tau' \in \{1,\ldots,M\}$\;

\Repeat{convergence}{
    Sample a task $\tau$ from $\mathcal{D}_{\mathrm{meta}}$: \linebreak
    $\left((X^T, Y^T), \mathcal{D}^C\right) \sim \mathcal{D}_{\mathrm{meta}}$\;
    Initialize auxiliary data: \linebreak
    $\mathcal{D}^{\mathrm{aux}} \leftarrow \emptyset$\;
    \ForEach{$\tau' \in \{1,\ldots,M\}\setminus\{\tau\}$}{
        \If{$s_{\tau,\tau'} \geq \gamma$}{
            $\mathcal{D}^{\mathrm{aux}}
            \leftarrow
            \mathcal{D}^{\mathrm{aux}}
            \cup \{\mathcal{D}_{\tau'}\}$\;
        }
    }
    
Compute the negative log-likelihood: \linebreak
$
\mathcal{L}_{\mathrm{NLL}}
\leftarrow
-\log p_\theta
\left(
Y^T
\mid
X^T,
\mathcal{D}^C,
\mathcal{D}^{\mathrm{aux}}
\right)
$\;
Update $\theta$: \linebreak
$
\theta
\leftarrow
\theta
-
\alpha\nabla_\theta
\mathcal{L}_{\mathrm{NLL}}
$\;
}
\end{algorithm}

\section{Extended related work}
\label{app:more_rel_work}

\paragraph{Task heterogeneity in meta-learning}
Task heterogeneity in meta-learning has been addressed by prior work through two broad strategies. One line of work explicitly models multiple task regimes using mixtures of meta-learners or hierarchical task structures, with individual components specializing to subsets of related tasks \citep{wang2022learning, pmlr-v97-yao19b, NEURIPS2019_7a9a322c, pmlr-v161-zhou21a, chan2023embracingassayheterogeneityneural}. These approaches typically rely on a prespecified number of task clusters or mixture components, thereby imposing some prior structure on the task distribution. 
This is difficult to justify for bioactivity prediction, where assay heterogeneity arises from multiple interacting factors and may not admit a small, disjoint set of task regimes. 
In large assay collections the relevant regimes may be numerous and overlapping, making a global mixture cardinality difficult to specify.
Among these approaches, \citet{NEURIPS2019_7a9a322c} is an exception, using a Dirichlet process mixture that can introduce additional components as it encounters new task regimes, but still assumes a discrete mixture structure over tasks.
A complementary line of work, more directly comparable to our setting, instead infers task-specific representations from the available context set and uses them to condition or modulate the meta-learner, thereby adapting to heterogeneous task distributions without explicitly enumerating the underlying task regimes \citep{pmlr-v180-wang22c, Vettoruzzo2024, NEURIPS2019_e4da3b7f}. 
Notably, \citet{pmlr-v180-wang22c} explicitly address the difficulty of inferring task identity from scarce context by representing it as a stochastic latent variable, which captures uncertainty over the underlying task regime.

\paragraph{Comparison with ICICL-TNP \citep{pmlr-v253-ashman24a}} 
The architecture of \ourmethod{} is related to ICICL-TNP, but differs primarily in how information from multiple auxiliary datasets is represented and exchanged.
Both architectures use coupled TNP-style processing paths for the task-specific context and auxiliary data, while avoiding quadratic attention over the potentially large auxiliary datasets. ICICL-TNP achieves this using a Perceiver-style \citep{pmlr-v139-jaegle21a} pseudo-token architecture: (i) the task-specific context set and each auxiliary dataset are separately encoded into their own sets of pseudo-tokens; (ii) the auxiliary pseudo-tokens attend to the context pseudo-tokens; (iii) self-attention is applied separately within each set of pseudo-tokens; and (iv) the context pseudo-tokens attend back to the auxiliary pseudo-tokens before conditioning the target representations. Consequently, observations from different auxiliary datasets exchange information only indirectly. \ourmethod{} concatenates auxiliary observations into a single sequence and applies linear self-attention across them, enabling direct information exchange between observations from different auxiliary datasets. This is made possible by augmenting each observation with a one-hot indicator of its task of origin, which preserves task identity after concatenation, while the use of FAVOR+ \citep{choromanski2021rethinking} in the auxiliary branch keeps self-attention computationally efficient even for large auxiliary sets. This removes the pseudo-token bottleneck and, in our setting, enables substantially more efficient conditioning on large auxiliary collections. However, as a result of representing task identity explicitly through one-hot indicators, the input embedding dimension fixes the maximum number of auxiliary datasets that can be accommodated without modifying the architecture. ICICL-TNP does not have this restriction, since task identity is represented structurally through separate pseudo-token sets.

\section{Discussion}
\label{app:discussion}
\subsection{Limitations}
\paragraph{Scope of the theoretical guarantees}
Both the result of \citet{pmlr-v253-ashman24a} in Eq.~\ref{eq:icicl_main_theorem} and Proposition~\ref{prop:ashman_to_protfams}, which motivate our selection of auxiliary datasets, are derived for exact Bayesian posterior predictive distributions under their respective generative assumptions.
They therefore do not directly guarantee improvements for finite-capacity models trained on a finite meta-dataset. Likewise, description-based relatedness is only a heuristic for identifying useful auxiliary assays.
However, we demonstrate that auxiliary conditioning remains effective across all three settings considered in this work: when relatedness is defined by shared target identity, by homology between distinct targets, and heuristically from assay descriptions. In the latter setting, randomly shuffling the retrieved auxiliary assays (Section~\ref{exp:ablation}) substantially degrades predictive performance, indicating that the observed gains depend on meaningful assay relatedness.

\paragraph{Fixed auxiliary-dataset capacity} 
Because \ourmethod{} encodes dataset identity using one-hot indicators, the maximum number of auxiliary datasets is fixed by the input dimensionality and cannot be increased after training without modifying the model, as elaborated in Appendix~\ref{app:more_rel_work}. In practice, however, we found this limitation to be mild: the model can simply be trained with a conservative upper bound and evaluated with fewer auxiliary datasets. Our ablation in Section~\ref{exp:ablation} demonstrates that performance remains strong across varying inference-time auxiliary-set cardinalities.

\subsection{Future work}
Proposition~\ref{prop:ashman_to_protfams} considers a two-level hierarchy in which protein-specific components share a common family-level component. In practice, biological relatedness can exhibit richer structure. A natural extension would therefore be to incorporate multiple levels of relatedness, allowing auxiliary information to be shared across increasingly broad groups of related targets. Beyond hierarchical structure, future work could also incorporate additional data modalities, such as protein sequence or structure and richer assay metadata, to define relatedness more comprehensively.

\section{Attention operators}
\label{app:attention-operators}

\paragraph{MHSA and MHCA \citep{NIPS2017_3f5ee243}}
Let $Z\in\mathbb R^{N\times d_z}$ and
$Z'\in\mathbb R^{M\times d_z}$ denote query and conditioning
representations, respectively, where $d_z$ is the embedding
dimension. Let $H$ denote the number of attention heads,
and let $d_k$ and $d_v$ denote the query/key and value
dimensions within each head. Layer indices are omitted
for clarity.

For each head $h=1,\ldots,H$, the query, key, and value
matrices are defined as
\begin{equation}
Q_h=ZW_{Q,h},\qquad
K_h=Z'W_{K,h},\qquad
V_h=Z'W_{V,h},
\end{equation}
where $W_{Q,h},W_{K,h}\in\mathbb R^{d_z\times d_k}$
and $W_{V,h}\in\mathbb R^{d_z\times d_v}$.
Multi-head cross-attention is defined as
\begin{align}
A_h(Z,Z')
&=\operatorname{softmax}
\left(\frac{Q_hK_h^\top}{\sqrt{d_k}}\right)V_h,\\
\operatorname{MHCA}(Z,Z')
&=\operatorname{Concat}
\bigl(A_1(Z,Z'),\ldots,A_H(Z,Z')\bigr)W_O,
\end{align}
where softmax is applied rowwise, concatenation is along
the feature dimension, and $W_O\in\mathbb R^{Hd_v\times d_z}$.
Multi-head self-attention is defined as
\begin{equation}
\operatorname{MHSA}(Z)=\operatorname{MHCA}(Z,Z).
\end{equation}

\paragraph{FAVOR+ \citep{choromanski2021rethinking}}
The positive random feature map
$\phi:\mathbb R^{d_k}\to\mathbb R^r_{>0}$, where $r$ is
the number of random features, is defined as
\begin{equation}
\phi(u)
=
\frac{\exp(-\|u\|^2/2)}{\sqrt r}
\begin{bmatrix}
\exp(\omega_1^\top u)\\
\vdots\\
\exp(\omega_r^\top u)
\end{bmatrix},
\end{equation}
where the projection vectors have marginal distribution
$\omega_i\sim\mathcal N(0,I_{d_k})$ and are sampled using
orthogonal random features, in independent blocks when
$r>d_k$. This yields
\begin{equation}
\mathbb E\!\left[\phi(u)^\top\phi(v)\right]
=\exp(u^\top v).
\end{equation}
Writing $\Phi$ for the rowwise application of $\phi$,
the feature matrices are defined as
\begin{equation}
Q'_h=\Phi(d_k^{-1/4}Q_h)\in\mathbb R^{N\times r},
\qquad
K'_h=\Phi(d_k^{-1/4}K_h)\in\mathbb R^{M\times r}.
\end{equation}
Bidirectional FAVOR+ approximates each attention head by
\begin{equation}
\widehat A_h(Z,Z')
=
D_h^{-1}Q'_h\bigl((K'_h)^\top V_h\bigr),
\qquad
D_h=\operatorname{diag}
\left(Q'_h\bigl((K'_h)^\top\mathbf 1_M\bigr)\right),
\end{equation}
where $\mathbf 1_M$ is the length-$M$ vector of ones.
By $\operatorname{FAVOR+}(Z)$, we denote the corresponding
multi-head self-attention operation:
\begin{equation}
\operatorname{FAVOR+}(Z)
=
\operatorname{Concat}
\bigl(\widehat A_1(Z,Z),\ldots,\widehat A_H(Z,Z)\bigr)W_O.
\end{equation}

\section{Proofs}
\label{app:proofs}

\subsection{Permutation invariance}
\begin{proposition}[Permutation invariance]
\label{prop:perm_invariance}
    Let $\mathcal D^{\mathrm{aux}}=\{\mathcal D_j\}_{j=1}^K$,
    and let $S:\mathcal D^K\to\mathcal D^K$ be a map
    satisfying
    \begin{equation}
    S(\mathcal D_{\rho(1)},\ldots,\mathcal D_{\rho(K)})
    =
    S(\mathcal D_1,\ldots,\mathcal D_K)
    \qquad
    \text{for every permutation }\rho.
    \label{eq:canonical-ordering}
    \end{equation}
    With the dataset indicator $e_j$ attached to each observation
    in the $j$-th dataset after sorting, predictions are invariant
    to permutations of datapoints within each
    $\mathcal D_j\in\mathcal D^{\mathrm{aux}}$ and to permutations
    of datasets in $\mathcal D^{\mathrm{aux}}$, at inference
    with fixed FAVOR+ features.    
\end{proposition}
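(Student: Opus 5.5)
The plan is to decompose the forward pass into set-level operations and check that each one is equivariant or invariant under the relevant permutations. Two symmetries are involved: (a) permutations of datapoints within each $\mathcal D_j$, and (b) a reordering $\rho$ of the datasets in $\mathcal D^{\mathrm{aux}}$.

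For (b), the argument is immediate from the canonical ordering in Eq.~\ref{eq:canonical-ordering}. The indicators $e_j$ are assigned after applying $S$, and $S$ is defined to be invariant to $\rho$. So the multiset of augmented auxiliary tokens $\{[x,y,e_j] : (x,y)\in S(\cdot)_j\}$ is exactly the same whatever the input order of the datasets. Case (b) therefore reduces to showing that the network is invariant to permutations of the concatenated auxiliary token sequence. This is strictly stronger than (a), since a within-dataset permutation is a special permutation of the concatenated sequence that leaves the indicators unchanged. It remains to check that $S$ respects within-dataset order: we interpret $\mathcal D_j$ as a set, and we assume the ordering criterion, e.g.\ relatedness or a Tanimoto-based score, depends only on set-level quantities.

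The main step is to show that, for any permutation matrix $P$ acting on the auxiliary token matrix $Z_a$, the model output at the targets is unchanged. We proceed layer by layer, by induction over encoder layers, and maintain two invariants: the auxiliary representations transform as $Z_a\mapsto PZ_a$, and the context and target representations are unchanged.

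The base case holds because the input MLP is applied tokenwise. For the inductive step:
\begin{enumerate*}[label=(\roman*)]
\item Tokenwise MLPs, layer norms and residuals commute with $P$.
\item FAVOR+ self-attention is equivariant with fixed features. Since $\Phi$ acts rowwise, $Q'_h\mapsto PQ'_h$, $K'_h\mapsto PK'_h$ and $V_h\mapsto PV_h$. Then $(PK'_h)^\top PV_h=(K'_h)^\top V_h$ and $(PK'_h)^\top\mathbf 1=(K'_h)^\top\mathbf 1$ because $P^\top P=I$ and $P^\top\mathbf 1=\mathbf 1$. Hence $\widehat A_h\mapsto P\widehat A_h$, and this survives the concatenation and $W_O$.
\item The context-to-auxiliary MHCA is invariant. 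Softmax cross-attention $\operatorname{softmax}(Q K^\top/\sqrt{d_k})V$ with keys and values $PK,PV$ gives $\operatorname{softmax}(QK^\top P^\top)PV$. Row-wise softmax commutes with column permutation, so the result is unchanged.
\item The context self-attention and the target-to-context cross-attention never see auxiliary tokens directly, so they stay unchanged by the inductive hypothesis.
\end{enumerate*}
The output MLP is tokenwise on the targets, which completes the induction.

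The delicate point, and the one I expect to be the main obstacle, is stating precisely what "fixed FAVOR+ features" means. The random $\omega_i$ must be drawn once and held fixed, not resampled per forward pass or per token position. Otherwise invariance holds only in distribution, or in expectation via $\mathbb E[\phi(u)^\top\phi(v)]=\exp(u^\top v)$.

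A second thing to verify is that the auxiliary tokens enter only through sum-type aggregations, namely attention over keys and values with no positional encodings. This is what makes the $P^\top P=I$ cancellations exact. I would make this explicit by appealing to the architecture description of Figure~\ref{fig:model}.

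Finally, for the within-dataset case (a), I would also note that the random subsampling to 512 points should be considered inside the map $S$ as fixed. The proposition concerns the deterministic predictor given $\mathcal D^{\mathrm{aux}}$, so this is part of the setup rather than an extra step of the proof.
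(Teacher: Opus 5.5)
Your proposal is correct and follows essentially the same route as the paper. The paper also verifies FAVOR+ equivariance via $\Phi(PQ)=P\Phi(Q)$ and the cancellations $(PK')^\top(PV)=K'^\top V$, $(PK')^\top\mathbf 1=K'^\top\mathbf 1$, cites the Set Transformer argument for composing equivariant self-attention with key/value-invariant cross-attention, and handles dataset order through the canonical ordering $S$ fixing each dataset's indicator; your layer-by-layer induction spells out the step the paper delegates to that citation, and your requirement that $S$ depend only on the set-level content of each $\mathcal D_j$ makes explicit what the paper assumes when it claims invariance ``for fixed dataset indicators.''
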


\paragraph{Proof.}
For fixed random features, $\Phi(PQ)=P\Phi(Q)$ for any
permutation matrix $P$, and
\begin{equation}
(PK')^\top(PV)=K'^\top V,
\qquad
(PK')^\top\mathbf 1=K'^\top\mathbf 1.
\end{equation}
Thus bidirectional FAVOR+ is permutation equivariant.
Permutation-equivariant self-attention and cross-attention
invariant to key/value ordering yield predictions invariant
to observation order for fixed dataset indicators \citep{pmlr-v97-lee19d}.
It therefore remains to consider dataset order in
$\mathcal D^{\mathrm{aux}}$.
By assumption, $S$ produces the same canonical ordering
under any dataset permutation, preserving each dataset's
indicator and hence the predictive distribution.
\hfill$\square$

\subsection{Proposition~\ref{prop:ashman_to_protfams}}
\label{app:prop_1_proof}
For convenience, we restate the proposition here.
{\begin{propositionrestated}[{\ref{prop:ashman_to_protfams}}]
Consider an extension of the data-generating process in Eq.~\ref{eq_everything_everywhere_all_at_once}
in which:
\[
\xi_{\mathrm{family}}\sim p(\xi),
\qquad
\xi_{\mathrm{prot}},\{\xi_{\mathrm{prot}'}\}
\sim p(\cdot\mid\xi_{\mathrm{family}}),
\]
with $f_\tau\sim P_{\xi_{\mathrm{prot}}}(f)$ and
$f_{\tau'}\sim P_{\xi_{\mathrm{prot}'}}(f)$.
Assume that auxiliary data from target homologs provide no
additional information for predicting $y$ once the target's
protein-specific component is known:
\[
p_{_{\xi_{\mathrm{prot}}}}(y\mid x,\mathcal{D}^C,\mathcal{D}^{\mathrm{aux}})
=
p_{_{\xi_{\mathrm{prot}}}}(y\mid x,\mathcal{D}^C).
\]
Then:
\[
\begin{aligned}
&\mathbb E_{\xi_{\mathrm{family}},\xi_{\mathrm{prot}},
             \mathcal{D}^C,\mathcal{D}^{\mathrm{aux}}}
\left[
\mathrm{KL}\!\left(
p_{_{\xi_{\mathrm{prot}}}}(y\mid x,\mathcal{D}^C)
\,\Vert\,p(y\mid x,\mathcal{D}^C,\mathcal{D}^{\mathrm{aux}})
\right)
\right]
\\
&\qquad\leq
\mathbb E_{\xi_{\mathrm{family}},\xi_{\mathrm{prot}},\mathcal{D}^C}
\left[
\mathrm{KL}\!\left(
p_{_{\xi_{\mathrm{prot}}}}(y\mid x,\mathcal{D}^C)
\,\Vert\,p(y\mid x,\mathcal{D}^C)
\right)
\right].
\end{aligned}
\]
\end{propositionrestated}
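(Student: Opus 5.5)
The plan is to write the gap between the two expected KL divergences as an expected KL between two predictive distributions, and then use nonnegativity of KL. This mirrors the argument behind Eq.~\ref{eq:icicl_main_theorem}, applied one level up the hierarchy.

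Throughout, write $p^\ast := p_{\xi_{\mathrm{prot}}}(y\mid x,\mathcal D^C)$, so that $p^\ast$ plays the role of the component-specific PPD.

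\textbf{Step 1: reduce the difference of expectations to log-ratios.} Expanding each KL, the cross term $\mathbb E_{y\sim p^\ast}[\log p^\ast]$ appears on both sides. After subtracting, the inequality is equivalent to
\[
\mathbb E_{\xi_{\mathrm{family}},\xi_{\mathrm{prot}},\mathcal D^C,\mathcal D^{\mathrm{aux}}}\,\mathbb E_{y\sim p^\ast}\!\left[\log \frac{p(y\mid x,\mathcal D^C,\mathcal D^{\mathrm{aux}})}{p(y\mid x,\mathcal D^C)}\right]\ \ge\ 0 .
\]
The expectation on the right-hand side of the proposition does not involve $\mathcal D^{\mathrm{aux}}$, so I will extend it trivially by integrating over $\mathcal D^{\mathrm{aux}}$ under the joint generative law. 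That puts both sides under the same joint expectation.

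\textbf{Step 2: identify the joint law of $y$ and $\mathcal D^{\mathrm{aux}}$.} In the hierarchical model, $y$ is generated from $f_\tau$ given $\xi_{\mathrm{prot}}$. The auxiliary data are generated from the sibling components $\xi_{\mathrm{prot}'}$ drawn given $\xi_{\mathrm{family}}$.

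Sampling $(\xi_{\mathrm{family}},\xi_{\mathrm{prot}},\mathcal D^C,\mathcal D^{\mathrm{aux}})$ and then $y\sim p^\ast$ is the same as sampling $y\sim p_{\xi_{\mathrm{prot}}}(y\mid x,\mathcal D^C,\mathcal D^{\mathrm{aux}})$. This is exactly where the stated assumption is used. It says that, given $\xi_{\mathrm{prot}}$ and $\mathcal D^C$, the variable $y$ is conditionally independent of $\mathcal D^{\mathrm{aux}}$, so $p^\ast$ is the true conditional of $y$.

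Marginalising over $(\xi_{\mathrm{family}},\xi_{\mathrm{prot}})$ given $(x,\mathcal D^C,\mathcal D^{\mathrm{aux}})$, the tuple $(y,\mathcal D^C,\mathcal D^{\mathrm{aux}})$ is distributed according to the joint model. In that model the conditional law of $y$ is $p(y\mid x,\mathcal D^C,\mathcal D^{\mathrm{aux}})$ and the conditional law of $\mathcal D^{\mathrm{aux}}$ is $p(\mathcal D^{\mathrm{aux}}\mid x,\mathcal D^C)$. The family component is integrated out inside both of these, which is why $p(y\mid x,\mathcal D^C,\mathcal D^{\mathrm{aux}})$ depends on $\mathcal D^{\mathrm{aux}}$ through $\xi_{\mathrm{family}}$ only.

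\textbf{Step 3: conclude with Gibbs' inequality.} By the tower property, the expression from Step 1 equals
\[
\mathbb E_{\mathcal D^C,\mathcal D^{\mathrm{aux}}}\Big[\mathrm{KL}\big(p(y\mid x,\mathcal D^C,\mathcal D^{\mathrm{aux}})\,\Vert\, p(y\mid x,\mathcal D^C)\big)\Big]\ \ge 0.
\]
Equivalently, it is the conditional mutual information $I(y;\mathcal D^{\mathrm{aux}}\mid x,\mathcal D^C)\ge 0$. This proves the inequality.

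\textbf{Main obstacle.} The delicate step is Step 2, justifying the swap of the order of integration. The PPD $p(y\mid x,\mathcal D^C)$ marginalises over the full hierarchy. I must check that drawing $\xi_{\mathrm{prot}}$ first, then $\mathcal D^C,\mathcal D^{\mathrm{aux}}$, then $y\sim p^\ast$, reproduces the joint model exactly.

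This requires two facts. First, $\mathcal D^C$ and $y$ depend on $(\xi_{\mathrm{family}},\mathcal D^{\mathrm{aux}})$ only through $\xi_{\mathrm{prot}}$, which is the stated assumption plus the generative structure. Second, the siblings $\xi_{\mathrm{prot}'}$ are conditionally independent of $\xi_{\mathrm{prot}}$ given $\xi_{\mathrm{family}}$.

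I will write out the joint density factorisation
\[
p(\xi_{\mathrm{family}})\,p(\xi_{\mathrm{prot}}\mid\xi_{\mathrm{family}})\,p(\{\xi_{\mathrm{prot}'}\}\mid\xi_{\mathrm{family}})\,p(\mathcal D^C,y\mid\xi_{\mathrm{prot}})\,p(\mathcal D^{\mathrm{aux}}\mid\{\xi_{\mathrm{prot}'}\})
\]
explicitly, and verify the marginals against it. Finiteness of the KLs is assumed implicitly, so that the subtraction in Step 1 is legitimate.
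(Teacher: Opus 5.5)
Your proof is correct and follows the same route as the paper's proof. You expand both KLs and cancel the common $\mathbb{E}\,H[p_{\xi_{\mathrm{prot}}}(y\mid x,\mathcal{D}^C)]$ term. You then use the conditional-independence assumption with the tower property to identify the marginal predictive $p(y\mid x,\mathcal{D}^C,\mathcal{D}^{\mathrm{aux}})$, and finish with $I(y;\mathcal{D}^{\mathrm{aux}}\mid x,\mathcal{D}^C)\geq 0$. The only difference is packaging: the paper evaluates the two cross-entropy terms separately as marginal predictive entropies and invokes ``conditioning does not increase entropy,'' whereas you merge them into a single log-ratio and read off an expected KL, which is the same inequality.
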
}

\noindent\textbf{Proof.}
The integrands on both sides do not depend explicitly on
$\xi_{\mathrm{family}}$, so this variable can be integrated out:
\begin{equation}
\begin{aligned}
&\mathbb E_{\xi_{\mathrm{prot}},\mathcal{D}^C,\mathcal{D}^{\mathrm{aux}}}
\left[
\mathrm{KL}\!\left(
p_{_{\xi_{\mathrm{prot}}}}(y\mid x,\mathcal{D}^C)
\,\Vert\,p(y\mid x,\mathcal{D}^C,\mathcal{D}^{\mathrm{aux}})
\right)
\right]
\\
&\qquad\leq
\mathbb E_{\xi_{\mathrm{prot}},\mathcal{D}^C}
\left[
\mathrm{KL}\!\left(
p_{_{\xi_{\mathrm{prot}}}}(y\mid x,\mathcal{D}^C)
\,\Vert\,p(y\mid x,\mathcal{D}^C)
\right)
\right].
\end{aligned}
\end{equation}
The rest of the argument follows the proof of
\citet{pmlr-v253-ashman24a}, with all expectations taken under the
marginal distributions induced by the hierarchical generative
process.

\medskip
\noindent\textbf{LHS.}
Expanding the KL divergence gives
\begin{equation}
\begin{aligned}
\mathrm{LHS}
={}&-\mathbb E_{\xi_{\mathrm{prot}},\mathcal{D}^C,\mathcal{D}^{\mathrm{aux}}}
H[p_{_{\xi_{\mathrm{prot}}}}(y\mid x,\mathcal{D}^C)]
\\
&-\mathbb E_{\xi_{\mathrm{prot}},\mathcal{D}^C,\mathcal{D}^{\mathrm{aux}}}
\int p_{_{\xi_{\mathrm{prot}}}}(y\mid x,\mathcal{D}^C)
\log p(y\mid x,\mathcal{D}^C,\mathcal{D}^{\mathrm{aux}})\,dy.
\end{aligned}
\end{equation}
The first integrand does not depend on $\mathcal{D}^{\mathrm{aux}}$,
so this variable also can be integrated out.
For the second term, conditional independence and
marginalization yield
\begin{equation}
\begin{aligned}
&-\mathbb E_{\xi_{\mathrm{prot}},\mathcal{D}^C,\mathcal{D}^{\mathrm{aux}}}
\int p_{_{\xi_{\mathrm{prot}}}}(y\mid x,\mathcal{D}^C)
\log p(y\mid x,\mathcal{D}^C,\mathcal{D}^{\mathrm{aux}})\,dy
\\
&\quad=
-\mathbb E_{\mathcal{D}^C,\mathcal{D}^{\mathrm{aux}}}\int
\mathbb E_{\xi_{\mathrm{prot}}\mid\mathcal{D}^C,\mathcal{D}^{\mathrm{aux}}}
\left[
p_{_{\xi_{\mathrm{prot}}}}(y\mid x,\mathcal{D}^C,\mathcal{D}^{\mathrm{aux}})
\right]
\log p(y\mid x,\mathcal{D}^C,\mathcal{D}^{\mathrm{aux}})\,dy
\\
&\quad=
-\mathbb E_{\mathcal{D}^C,\mathcal{D}^{\mathrm{aux}}}\int
p(y\mid x,\mathcal{D}^C,\mathcal{D}^{\mathrm{aux}})
\log p(y\mid x,\mathcal{D}^C,\mathcal{D}^{\mathrm{aux}})\,dy
\\
&\quad=
\mathbb E_{\mathcal{D}^C,\mathcal{D}^{\mathrm{aux}}}
H[p(y\mid x,\mathcal{D}^C,\mathcal{D}^{\mathrm{aux}})].
\end{aligned}
\end{equation}
Hence
\begin{equation}
\begin{aligned}
\mathrm{LHS}
={}&-\mathbb E_{\xi_{\mathrm{prot}},\mathcal{D}^C}
H[p_{_{\xi_{\mathrm{prot}}}}(y\mid x,\mathcal{D}^C)]
\\
&+\mathbb E_{\mathcal{D}^C,\mathcal{D}^{\mathrm{aux}}}
H[p(y\mid x,\mathcal{D}^C,\mathcal{D}^{\mathrm{aux}})].
\end{aligned}
\end{equation}

\medskip
\noindent\textbf{RHS.}
Similarly,
\begin{equation}
\begin{aligned}
\mathrm{RHS}
={}&-\mathbb E_{\xi_{\mathrm{prot}},\mathcal{D}^C}
H[p_{_{\xi_{\mathrm{prot}}}}(y\mid x,\mathcal{D}^C)]
\\
&-\mathbb E_{\xi_{\mathrm{prot}},\mathcal{D}^C}
\int p_{_{\xi_{\mathrm{prot}}}}(y\mid x,\mathcal{D}^C)
\log p(y\mid x,\mathcal{D}^C)\,dy.
\end{aligned}
\end{equation}
Marginalizing $\xi_{\mathrm{prot}}$ in the second term gives
\begin{equation}
\begin{aligned}
&-\mathbb E_{\xi_{\mathrm{prot}},\mathcal{D}^C}
\int p_{_{\xi_{\mathrm{prot}}}}(y\mid x,\mathcal{D}^C)
\log p(y\mid x,\mathcal{D}^C)\,dy
\\
&\quad=
-\mathbb E_{\mathcal{D}^C}\int
\mathbb E_{\xi_{\mathrm{prot}}\mid\mathcal{D}^C}
\left[p_{_{\xi_{\mathrm{prot}}}}(y\mid x,\mathcal{D}^C)\right]
\log p(y\mid x,\mathcal{D}^C)\,dy
\\
&\quad=
-\mathbb E_{\mathcal{D}^C}\int
p(y\mid x,\mathcal{D}^C)\log p(y\mid x,\mathcal{D}^C)\,dy
\\
&\quad=
\mathbb E_{\mathcal{D}^C}H[p(y\mid x,\mathcal{D}^C)].
\end{aligned}
\end{equation}
Thus
\begin{equation}
\begin{aligned}
\mathrm{RHS}
={}&-\mathbb E_{\xi_{\mathrm{prot}},\mathcal{D}^C}
H[p_{_{\xi_{\mathrm{prot}}}}(y\mid x,\mathcal{D}^C)]
\\
&+\mathbb E_{\mathcal{D}^C}H[p(y\mid x,\mathcal{D}^C)].
\end{aligned}
\end{equation}

Finally, conditioning does not increase entropy in expectation:
\begin{equation}
\mathbb E_{\mathcal{D}^C,\mathcal{D}^{\mathrm{aux}}}
H[p(y\mid x,\mathcal{D}^C,\mathcal{D}^{\mathrm{aux}})]
\leq
\mathbb E_{\mathcal{D}^C}H[p(y\mid x,\mathcal{D}^C)].
\end{equation}
The remaining terms are identical, so
$\mathrm{LHS}\leq\mathrm{RHS}$. \hfill$\square$

\end{document}